\newcommand{\ra}[1]{\renewcommand{\arraystretch}{#1}}
\DeclareMathOperator*{\argmin}{arg\,min}
\newtheorem*{rep@theorem}{\rep@title}
\newcommand{\newreptheorem}[2]{%
\newenvironment{rep#1}[1]{%
 \def\rep@title{#2 \ref{##1}}%
 \begin{rep@theorem}}%
 {\end{rep@theorem}}}
\newcommand{\innerprod}[3][\defmu]{{\left\langle {#2},{#3} \right\rangle}_{#1}}
\newtheorem{theorem}{Theorem}[section]
\newtheorem{lemma}[theorem]{Lemma}
\newtheorem{proposition}[theorem]{Proposition}
\newtheorem{definition}[theorem]{Definition}
\newcommand{\algname}{Streaming Gradient Boosting\xspace}
\newcommand{\algshort}{SGB\xspace}
\begin{document}

%

%

\twocolumn[

\aistatstitle{Gradient Boosting on Stochastic Data Streams}

\aistatsauthor{Hanzhang Hu \And Wen Sun \And Arun Venkatraman \And Martial Hebert \And J. Andrew Bagnell}

\aistatsaddress{School of Computer Science, Carnegie Mellon University \\\{hanzhang, wensun, arunvenk, hebert, dbagnell\}@cs.cmu.edu } 
]


\begin{abstract}
Boosting is a popular ensemble algorithm that generates more powerful learners by linearly combining base models from a simpler hypothesis class.  In this work, we investigate the problem of adapting batch gradient boosting for minimizing convex loss functions to online setting where the loss at each iteration is i.i.d sampled from an unknown distribution. To generalize from batch to online, we first introduce the definition of online weak learning edge with which for strongly convex and smooth loss functions, we present an algorithm, \algname (\algshort) with exponential shrinkage guarantees in the number of weak learners.  
We further present an adaptation of \algshort to optimize non-smooth loss functions, for which we derive a $O(\ln N / N)$ convergence rate. We also show that our analysis can extend to adversarial online learning setting under a stronger assumption that the online weak learning edge will hold in adversarial setting.
We finally demonstrate experimental results showing that in practice our algorithms can achieve competitive results as classic gradient boosting while using less computation.
\end{abstract}

\section{INTRODUCTION}
Boosting \citep{freund1995desicion} is a popular method that leverages simple learning models (e.g., decision stumps) to generate powerful learners. Boosting has been used to great effect and trump other learning algorithms in a variety of applications. In computer vision, boosting was made popular by the seminal Viola-Jones Cascade~\citep{viola2001rapid} and is still used to generate state-of-the-art results in pedestrian detection~\citep{nam2014local,yang2015convolutional,zhu2016}. Boosting has also found success in domains ranging from document relevance ranking~\citep{yahoo} and transportation~\citep{zhang2015gradient} to medical inference~\citep{atkinson2012assessing}. Finally, boosting yields an anytime property at test time, which allows it to work with varying computation budgets~\citep{grubb2012speedboost} for use in real-time applications such as controls and robotics.

The advent of large-scale data-sets has driven the need for adapting boosting from the traditional batch setting, where the optimization is done over the whole dataset, to the online setting where the weak learners (models) can be updated with streaming data. 
In fact, online boosting has received tremendous attention so far. For classification, \citep{lu:2012,Oza01onlinebagging,beygelzimer2015optimal} proposed online boosting algorithms along with theoretical justifications. Recent work by ~\cite{beygelzimer2015online}, addressed the regression task through the introduction of \emph{Online Gradient Boosting} (OGB). 
We build upon on the developments in~\citep{beygelzimer2015online} to devise a new set of algorithms presented below. 

In this work, we develop streaming boosting algorithms for regression with strong theoretical guarantees under stochastic setting, where at each round the data are i.i.d sampled from some unknown fixed distribution. 
In particular, our algorithms are streaming extension to the classic gradient boosting~\citep{friedman2001greedy}, where weak predictors are trained in a stage-wise fashion to approximate the functional gradient of the loss with respect to the previous ensemble prediction, a procedure that is shown by~\cite{Mason00boostingalgorithms} to be functional gradient descent of the loss in the space of predictors. Since the weak learners cannot match the gradients of the loss exactly, we measure the error of approximation by redefining of \emph{edge} of online weak learners~\citep{beygelzimer2015optimal} for online regression setting. 

Assuming a non-trivial edge can be achieved by each deployed weak online learner, we develop algorithms to handle smooth or non-smooth loss functions, and theoretically analyze the convergence rates of our streaming boosting algorithms. Our first algorithm targets strongly convex and smooth loss functions and achieves exponential decay on the average regret with respect to the number of weak learners. We show the ratio of the decay depends on the edge and also the condition number of the loss function. The second algorithm, designed for strongly convex but non-smooth loss functions, extends from the batch residual gradient boosting algorithm from \citep{grubb2011generalized}. We show that the algorithm achieves $O(\ln N/N)$ convergence rate with respect to the number of weak learners $N$, which matches the online gradient descent (OGD)'s no-regret rate for strongly convex loss \citep{hazan2007logarithmic}.  Both of our algorithms promise that as $T$ (the number of samples) and $N$ go to infinity, the average regret converges to zero. Our analysis leverages Online-to-Batch reduction \citep{cesa2004generalization,JMLR:v15:hazan14a}, hence our results naturally extends to adversarial online learning setting as long as the weak online learning edge holds in adversarial setting, a harsher setting than stochastic setting. We conclude with some proof-of-concept experiments to support our analysis. We demonstrate that our algorithm significantly boosts the performance of weak learners and converges to the performance of classic gradient boosting with less computation.

\section{RELATED WORK}

Online boosting algorithms have been evolving since their batch counterparts are introduced. \cite{Oza01onlinebagging} developed some of the first online boosting algorithm, and their work are applied to online feature selection~\citep{grabner:2006} and online semi-supervised learning~\citep{grabner:2008}. 
\cite{leistner:2009} introduced online gradient boosting for the classification setting albeit without a theoretical analysis. \cite{lu:2012} developed the first convergence guarantees of online boosting for classification. Then ~\cite{beygelzimer2015optimal} presented two online classification boosting algorithms that are proved to be respectively optimal and adaptive. 

Our work is most related to \citep{beygelzimer2015online}, which extends gradient boosting for regression to the online setting under a smooth loss: each weak online learner is trained by minimizing a linear loss, and weak learners are combined using Frank-Wolfe~\citep{frank1956algorithm} fashioned updates. Their analysis generalizes those of batch boosting for regression~\citep{zhang:2005}. In particular, these proofs forgo edge assumptions of the weak learners. 
Though Frank-Wolfe is a nice projection-free algorithm, it has relatively slow convergence and usually is restricted to smooth loss functions. 
In our work, each weak learner instead minimizes the squared loss between its prediction and the gradient, which allows us to treat weak learners as approximations of the gradients thanks to the weak learner edge assumption. Hence we can mimic classic gradient boosting and use a gradient descent approach to combine the weak learners' predictions.
These differences enable our algorithms to handle non-smooth convex losses, such as hinge and $L_1$-losses, and result in convergence bounds that is more analogous to the bounds of classic batch boosting algorithms. 
This work also differs from \citep{beygelzimer2015online} in that we assume an online weak learner edge exists, a common assumption in the classic boosting literature~\citep{freund1995desicion, freund1999intro2boost} that is extended to the online boosting for classification by~\citep{lu:2012,beygelzimer2015optimal}. With this assumption, we analyze online gradient boosting using techniques from gradient descent for convex losses~\citep{hazan2007logarithmic}.

\section{PRELIMINARIES}
In the classic online learning setting, at every time step $t$, the learner $\mathcal{A}$ first makes a prediction (i.e., picks a predictor $f_t \in\mathcal{F}$, where $\mathcal{F}$ is a pre-defined class of predictors) on the input  $x_t\in\mathbb{R}^d$, then receives a loss $\ell_t(f_t(x_t))$. The learner then updates $f_t$ to $f_{t+1}$. The samples $(\ell_t,x_t)$ could be generated by an adversary, but this work mainly focuses on the setting where $(\ell_t,x_t)\sim D$ are i.i.d sampled from a distribution $D$.
The regret $R_{\mathcal{A}}(T)$ of the learner is defined as the difference between the total loss from the learner and the total loss from the best hypothesis in hindsight under the sequence of samples  $\{ (\ell_t, x_t) \}_t$:
\begin{align}
    R_{\mathcal{A}}(T) = \sum_{t=1}^T \ell_t(f_t(x_t)) - \min_{f^*\in\mathcal{F}}\sum_{t=1}^T\ell_t(f^*(x_t)).
\end{align} 
We say the online learner is \emph{no-regret} if and only if $R_{\mathcal{A}}(T)$ is $o(T)$. That is, time averaged, the online learner predictor $f_t$ is doing as well as the best hypothesis $f^*$ in hindsight. 
We define \emph{risk} of a hypothesis $f$ as $\mathbb{E}_{(\ell, x)\sim D}[\ell(f(x))]$. Our analysis of the risk leverages the classic Online-to-Batch reduction \citep{cesa2004generalization,JMLR:v15:hazan14a}. The online-to-batch reduction first analyzes  regret without the stochastic assumption on the sequence of loss $\ell$, and it then relates  regret to  risk using concentration of measure. 

Throughout the paper we will use the concepts of strong convexity and smoothness.
A function $\ell(x)$ is said to be $\lambda$-strongly convex and $\beta$-smooth with respect to norm $\|\cdot\|$ if and only if for any pair $x_1$ and $x_2$:
\begin{align}
 &\frac{\lambda}{2}\|x_1 - x_{2}\|^2 \leq \ell(x_1) - \ell(x_{2}) - \nabla \ell(x_2)(x_1-x_{2})  \nonumber\\
 &\leq \frac{\beta}{2}\|x_1 - x_{2}\|^2,
\end{align} where $\nabla \ell(x)$ denotes the gradient of function $\ell$ with respect to $x$. 
å
\subsection{Online Boosting Setup}
\label{sec:boosting_setup}
Our online boosting setup is similar to~\citep{beygelzimer2015optimal} and~\citep{beygelzimer2015online}. At each time step $t = 1,..,T$, the environment picks loss $\ell_t: \mathbb{R}^m \rightarrow \mathbb{R}$. The online boosting learner makes a prediction ${y}_t \in \mathbb{R}^m$ without knowing $\ell_t$. Then the learner suffers loss $\ell_t({y}_t)$. Throughout the paper we assume the loss is bounded as $|\ell_t(y)|\leq B, B\in\mathbb{R}^+, \forall t,y$. We also assume that the gradient of the loss $\nabla\ell_t(y)$ is also bounded as $\|\nabla\ell_t(y)\|\leq G, G\in\mathbb{R}^+, \forall t,y$.\footnote{Throughout the paper, the notation $\|x\|$ for any finite dimension vector $x$ stands for the classic L2 norm.} The online boosting learner maintains a sequence of weak online learning algorithms $\mathcal{A}_1, ..., \mathcal{A}_N$. 
Each weak learner $\mathcal{A}_i$ can only use hypothesis from a restricted hypothesis class $\mathcal{H}$ to produce its prediction $\hat{y}^i_t = h_t^i(x_t)$ ($h: \mathbb{R}^d\to\mathbb{R}^m,\forall h\in\mathcal{H}$), where $h_t^i\in\mathcal{H}$. 
To make a prediction ${y}_t$ at each iteration, each $\mathcal{A}_i$ will first make a prediction $\hat{y}^i_t\in\mathbb{R}^m$ where $\hat{y}^i_t = h_t^i(x_t)$. The online boosting learner combines all the weak learners' predictions to produce the final prediction $y_t$ for sample $x_t$. 
The online learner then suffers loss $\ell_t(y_t)$ after the loss $\ell_t$ is revealed. As we will show later, with the loss $\ell_t$, the online learner will pass a square loss to each weak learner. Each weak learner will then use its internal no-regret online update procedure to update its own weak hypothesis from $h_t^i$ to $h_{t+1}^i$.  
In stochastic setting where $\ell_t$ and $x_t$ are i.i.d samples from a fixed distribution, the online boosting learner will output a combination of the hypothesises that were generated by weak learners as the final boosted hypothesis for future testing.

By leveraging linear combination of weak learners, the goal of the online boosting learner is to boost the performance of a single online learner $\mathcal{A}_i$. 
Additionally, we ideally want the prediction error to decrease exponentially fast in the number $N$ of weak learners, as is the result from classic batch gradient boosting \citep{grubb2011generalized}. 

\section{WEAK ONLINE LEARNING}
\label{sec:weak_learn}
We specifically consider the setting where each weak learner minimizes a square loss \mbox{$\|y - h(x)\|^2$}, where  $y$ is the regression target, and $h$ is in the weak-learner hypothesis class $\mathcal{H}$.
At each step $t$, a weak online learner $\mathcal{A}$ chooses a predictor $h_t \in \mathcal{H}$ to predict $h_t(x_t)$, receives the target $y_t$\footnote{Abuse of notation: in Sec~\ref{sec:weak_learn}, $y_t\in\mathbb{R}^m$ simply stands for a regression target for the weak learner at step $t$, not the final prediction of the boosted learner defined in Sec.~\ref{sec:boosting_setup}.} and then suffers loss $\|y_t - h_t(x_t)\|^2$. With this, we now introduce the definition of Weak Online Learning Edge.
\begin{definition}
\label{def:weak_learning}(\textbf{Weak Online Learning Edge}) Given a restricted hypothesis class $\mathcal{H}$ and a sequence of square losses $\{\|y_t - h(x_t)\|^2\}_t$, the weak online learner predicts a sequence $\{h_t\}$ that has edge $\gamma \in (0,1]$, such that with high probability $1-\delta$:
\begin{align}
\label{eq:weak_learner_eq}
\sum_{t=1}^T \|y_t - h_t(x_t)\|^2\leq (1-\gamma)\sum_{t=1}^T \|y_t\|^2 + R(T),
\end{align} where $R(T)\in o(T)$ is usually known as the excess loss. 
\end{definition} The high probability $1-\delta$ comes from the possible randomness of the weak online learner and the sequence of examples. Usually the dependence of the high probability bound on $\delta$ is poly-logarithmic in $1/\delta$ that is included in the term $R(T)$. We will give a concrete example on this edge definition in next section where we will show what $R(T)$ consists of.  Intuitively, a larger edge implies that the hypothesis is able to better explain the variance of the learning targets $y$. Our online weak learning definition is closely related to the one from~\citep{beygelzimer2015optimal} in that our definition is an result of the following two assumptions: (1) the online learning problem is agnostic-learnable (i.e., the weak learner has $\frac{o(T)}{T} \rightarrow 0$ time-averaged regret against the best hypothesis $h \in \mathcal{H}$) with high probability:
\begin{align}
\sum_{t=1}^T \|y_t - h_t(x_t)\|^2\leq \min_{h\in\mathcal{H}}\sum_{t=1}^T \|y_t-h(x_t)\|^2 + o(T),
\label{eq:learnable}
\end{align} 
and (2) the restricted hypothesis class $\mathcal{H}$ is rich enough such that for any sequence of $\{y_t, x_t\}$ with high probability:
\begin{align}
\label{eq:leastsqure}
\min_{h\in\mathcal{H}}\sum_{t=1}^T\|y_t - h(x_t)\|^2 \leq (1-\gamma)\sum_{t=1}^T\|y_t\|^2 + o(T).
\end{align} 
Our definition of online weak learning directly generalizes the batch weak learning definition in~\citep{grubb2011generalized} to the online setting by the additional agnostic learnability assumption as shown in Eqn.~\ref{eq:learnable}.


Note that we pick square losses (Eqn.~\ref{eq:leastsqure}) in our weak online learning definition. As we will show later, the goal is to enforce that the weak learners to accurately predict gradients, as was also originally used in the batch gradient boosting algorithm~\citep{friedman2001greedy}. Least-squares losses are also shown to be important in streaming tasks by~\citep{opauc} for their superior computational and theoretical properties. 

The above online weak learning edge definition immediately implies the following result, which is used in later proofs:
\begin{lemma}
\label{lemma:from_weak_learning}
Given the sequence of losses $\|y_t - h(x_t)\|^2$, $1\leq t\leq T$, the online weak learner generates a sequence of predictors $\{h_t\}_t$, such that:
\begin{align}
\sum_{t=1}^T 2y_t^T h_t(x_t) \geq \gamma\sum_{t=1}^T \|y_t\|^2 - R(T), \;\;\;\; \gamma\in (0,1].
\end{align} 
\end{lemma}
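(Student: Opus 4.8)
The plan is to obtain the claimed inequality as a direct algebraic consequence of the Weak Online Learning Edge in Definition~\ref{def:weak_learning}. The key observation is that the squared loss appearing in the edge condition, Eqn.~\ref{eq:weak_learner_eq}, can be expanded so as to surface the cross term $2y_t^T h_t(x_t)$ that the lemma seeks to lower bound.

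First I would expand each summand of the edge inequality through the identity
\begin{align*}
\|y_t - h_t(x_t)\|^2 = \|y_t\|^2 - 2 y_t^T h_t(x_t) + \|h_t(x_t)\|^2.
\end{align*}
Summing over $t = 1, \dots, T$ and substituting the result into the edge bound of Eqn.~\ref{eq:weak_learner_eq} gives
\begin{align*}
\sum_{t=1}^T \|y_t\|^2 - \sum_{t=1}^T 2 y_t^T h_t(x_t) + \sum_{t=1}^T \|h_t(x_t)\|^2 \leq (1-\gamma)\sum_{t=1}^T \|y_t\|^2 + R(T).
\end{align*}

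Next I would cancel the common term $\sum_{t=1}^T \|y_t\|^2$ on both sides and rearrange to isolate the cross-term sum, which yields
\begin{align*}
\sum_{t=1}^T 2 y_t^T h_t(x_t) \geq \gamma \sum_{t=1}^T \|y_t\|^2 + \sum_{t=1}^T \|h_t(x_t)\|^2 - R(T).
\end{align*}
Finally, because every $\|h_t(x_t)\|^2 \geq 0$, the sum $\sum_{t=1}^T \|h_t(x_t)\|^2$ is nonnegative and can simply be dropped from the right-hand side, delivering exactly the stated bound with $\gamma \in (0,1]$.

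In truth there is no substantive obstacle here: the statement is essentially a one-line rearrangement of the edge definition. The only point demanding minor care is the sign bookkeeping — moving the negative cross term across the inequality (equivalently, multiplying through by $-1$) is what flips the $+R(T)$ of the edge assumption into the $-R(T)$ appearing in the conclusion, and one must confirm that the discarded term enters with a favorable (nonnegative) sign so that dropping it only weakens the bound. The high-probability qualifier $1-\delta$ carries over verbatim from the edge assumption and requires no separate argument.
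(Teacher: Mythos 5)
Your proposal is correct and follows essentially the same route as the paper's own proof: expand the square in the edge inequality of Eqn.~\ref{eq:weak_learner_eq}, cancel the common $\sum_{t}\|y_t\|^2$ term, drop the nonnegative $\sum_{t}\|h_t(x_t)\|^2$, and rearrange signs. No gaps; the sign bookkeeping you flag is exactly the only point of care and you handle it correctly.
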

The above lemma can be proved by expanding the square on the LHS of Eqn.~\ref{eq:weak_learner_eq}, cancelling common terms and rearranging terms.

\subsection{Why Weak Learner Edge is Reasonable?}
We demonstrate here that the weak online learning edge assumption is reasonable. 
Let us consider the case that the hypothesis class $\mathcal{H}$ is closed under scaling (meaning if $h \in \mathcal{H}$, then for all $\alpha \in \mathbb{R}$, $\alpha h \in \mathcal{H}$) and let us assume $x\sim\ D$, and $y = f^*(x)$ for some unknown function $f^*$. We define the inner product $\langle h_1, h_2\rangle$ of any two functions $h_1,h_2$ as $\mathbb{E}_{x\sim D} [h_1(x)^T h_2(x)]$ and the squared norm $\|h\|^2$ of any function $h$ as $\langle h, h \rangle$. We assume $f^*$ is bounded in a sense $\|f^*(x)\|\leq F\in\mathbb{R}^+$. The following proposition shows that as long as $f^*$ is not perpendicular to the span of $\mathcal{H}$ ($f^*\not\perp span(\mathcal{H})$), i.e., $\exists h\in span(\mathcal{H})$ such that $\langle h, f^* \rangle \neq 0$, then we can achieve a non-zero edge:
\begin{proposition}
\label{prop:edge_example}
Consider any sequence of pairs $\{x_t, y_t\}_{t=1}^T$, where $x_t$ is i.i.d sampled from $D$, $y_t = f^*(x_t)$ and $f^*\not\perp span(\mathcal{H})$. Run any no-regret online algorithm $\mathcal{A}$ on sequence of losses $\{\|y_t - h(x_t)\|^2\}_t$ and output a sequence of predictions $\{h_t\}_t$. With probability at least $1-\delta$, there exists a weak online learning edge $\gamma\in(0,1]$, such that:
\begin{align}
&\sum_{t=1}^T \|h_t(x_t) - y_t\|^2 \leq (1-\gamma)\sum_{t=1}^T \|y_t\|^2\nonumber\\
&\;\;\;\;\;\;\; +R_{\mathcal{A}}(T)+ (2-\gamma)O\Big(\sqrt{ T\ln(1/\delta)}\Big), \nonumber
\end{align} where $R_{\mathcal{A}}(T)$ is the regret of online algorithm $\mathcal{A}$.
\end{proposition}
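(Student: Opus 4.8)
The plan is to reduce the empirical inequality to a population-level edge, using the no-regret guarantee together with two applications of a concentration inequality. First I would establish the edge at the level of the distribution $D$. Because $f^*\not\perp\mathrm{span}(\mathcal{H})$, there is some $g=\sum_i c_i h_i\in\mathrm{span}(\mathcal{H})$ with $\langle g,f^*\rangle\neq 0$; expanding $\langle g,f^*\rangle=\sum_i c_i\langle h_i,f^*\rangle$ forces at least one base hypothesis $h_0\in\mathcal{H}$ to satisfy $\langle h_0,f^*\rangle\neq 0$. Invoking scaling closure, I would minimize the scalar $\alpha$ in $\|f^*-\alpha h_0\|^2=\|f^*\|^2-2\alpha\langle f^*,h_0\rangle+\alpha^2\|h_0\|^2$; the optimum $\alpha^\star=\langle f^*,h_0\rangle/\|h_0\|^2$ gives $\|f^*-\alpha^\star h_0\|^2=(1-\gamma)\|f^*\|^2$ with $\gamma=\langle f^*,h_0\rangle^2/(\|f^*\|^2\|h_0\|^2)$. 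By Cauchy--Schwarz and $\langle f^*,h_0\rangle\neq 0$, this $\gamma$ lies in $(0,1]$, and crucially it is a fixed constant determined by $D$ alone, not by the drawn sample.

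Next I would invoke the no-regret property of $\mathcal{A}$ against the fixed competitor $h^\star=\alpha^\star h_0\in\mathcal{H}$, yielding $\sum_t\|y_t-h_t(x_t)\|^2\le\sum_t\|y_t-h^\star(x_t)\|^2+R_{\mathcal{A}}(T)$. Since $h^\star$ is selected from the population and is therefore independent of the sample, the terms $\|y_t-h^\star(x_t)\|^2$ form an i.i.d.\ bounded sequence (using $\|f^*(x)\|\le F$ and boundedness of $h^\star$), so Hoeffding's inequality gives $\sum_t\|y_t-h^\star(x_t)\|^2\le T\|f^*-h^\star\|^2+O(\sqrt{T\ln(1/\delta)})=(1-\gamma)\,T\|f^*\|^2+O(\sqrt{T\ln(1/\delta)})$ with probability at least $1-\delta/2$. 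A second, symmetric application of Hoeffding to the bounded i.i.d.\ terms $\|y_t\|^2$ (whose mean is $\|f^*\|^2$) gives $T\|f^*\|^2\le\sum_t\|y_t\|^2+O(\sqrt{T\ln(1/\delta)})$ with probability at least $1-\delta/2$.

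Finally I would chain these inequalities. Multiplying the second concentration bound by $(1-\gamma)$ and substituting it into the first converts the population term $(1-\gamma)T\|f^*\|^2$ into the desired empirical term $(1-\gamma)\sum_t\|y_t\|^2$, at the cost of an additional $(1-\gamma)O(\sqrt{T\ln(1/\delta)})$. Combining the two deviation terms, one with coefficient $1$ and one with coefficient $1-\gamma$, produces exactly the $(2-\gamma)O(\sqrt{T\ln(1/\delta)})$ factor in the statement, and a union bound over the two events delivers the overall probability $1-\delta$.

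The main obstacle is the concentration step, specifically the requirement that the competitor $h^\star$ be \emph{data-independent}: this is precisely what makes $\{\|y_t-h^\star(x_t)\|^2\}_t$ an i.i.d.\ bounded sequence and legitimizes a direct Hoeffding bound rather than a uniform-convergence argument over all of $\mathcal{H}$. Establishing the population edge \emph{before} touching the sample is what secures this independence; the remaining work---tracking the two coefficients, taking the union bound, and verifying $\gamma\in(0,1]$ via Cauchy--Schwarz---is routine bookkeeping.
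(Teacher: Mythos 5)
Your proposal is correct and follows essentially the same route as the paper's proof: establish a population-level edge $\gamma$ from a fixed competitor obtained by optimally scaling a base hypothesis with $\langle h_0,f^*\rangle\neq 0$ (the paper picks the normalized maximizer $\tilde h$ and scales by $\epsilon\|f^*\|$, which yields the same $(1-\gamma)\|f^*\|^2$ identity), then combine the no-regret bound against that fixed competitor with two Hoeffding applications and a union bound, tracking the coefficients $1$ and $1-\gamma$ to obtain the $(2-\gamma)O\big(\sqrt{T\ln(1/\delta)}\big)$ term. No gaps.
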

The proof of the above proposition can be found in Appendix. Matching to Eq.~\ref{eq:weak_learner_eq}, we have $R(T) = R_{\mathcal{A}}(T) + (2-\gamma)O\Big(\sqrt{ T\ln(1/\delta)}\Big) \in o(T)$.
In addition, the contrapositive of the proposition implies that without a positive
edge, $span(\mathcal{H})$ is orthogonal to $f^*$ so that no linear boosted ensemble can approximate $f^*$. Hence having a positive online weak learner edge is necessary for online boosted algorithms.

\section{ALGORITHM}
\subsection{Smooth Loss Functions}
We first present \algname (\algshort), an algorithm (Alg.~\ref{alg:online_gradient_boost}) that is designed for loss functions $\{\ell_t(y)\}$ that are $\lambda$-strongly convex and $\beta$-smooth. 
\begin{algorithm}[tb]
\caption{\algname (\algshort)}
 \label{alg:online_gradient_boost}
\begin{algorithmic}[1]
  \STATE {\bfseries Input:} A restricted  class $\mathcal{H}$. $N$ online weak learners $\{\mathcal{A}_i\}_{i=1}^N$. Learning rate $\eta$.
  \STATE Each weak learner initlizes a hypothesis $h_i^1\in\mathcal{H},\forall 1\leq i\leq N$.
 \FOR {t = 1 to T}
    \STATE Receive  $x_t$ and initialize $y_t^{0} = y_0$ (e.g., $y_0 = 0$). 
    \FOR {i = 1 to N}
    \label{line:GD}
        \STATE Set the partial sum $y_t^{i} = y_t^{i-1} - \eta h_i^{t}(x_t)$.
    \ENDFOR
    \STATE Predict $y_t = y_t^N$.
    \STATE $\ell_t$ is revealed and learner suffers loss $\ell_t(y_t)$.
    \FOR {i = 1 to N}
        \STATE Compute gradient w.r.t partial sum: $\nabla_i^t = \nabla\ell_t(y_t^{i-1})$.
        \label{line:gradient}
        \STATE Feed loss $\|\nabla_i^t - h_i^t(x_t)\|^2$ to $\mathcal{A}^i$.
        \label{line:feed}
        \STATE Weak learner $\mathcal{A}^i$ computes $h_i^{t+1}$ using its no-regret update procedure.
        \label{line:weak_learner_update}
    \ENDFOR
 \ENDFOR
 \STATE Set $\bar{h}_i =\frac{1}{T} \sum_{t=1}^Th_{i}^t,\forall 1\leq i\leq N$. \label{line:average}
 \STATE {\bfseries Return:$\big\{\bar{h}_1,..., \bar{h}_N$\big\}}. \label{line:stoch_return} 
\end{algorithmic}
\end{algorithm}
Alg.~\ref{alg:online_gradient_boost} is the online version of the classic batch gradient boosting algorithms~\citep{friedman2001greedy,grubb2011generalized}. Alg.~\ref{alg:online_gradient_boost} maintains $N$ weak learners. At each time step $t$, given example $x_t$, the algorithm predicts $y_t$ by linearly combining the weak learners' predictions (Line~\ref{line:GD}). Then after receiving loss $\ell_t$, for each weak learner, the algorithm computes the gradient of $\ell_t$ with respect to $y$ evaluated at the \emph{partial} sum $y_t^{i-1}$ (Line~\ref{line:gradient}) and feeds the square loss  $l_t(h)$ with the computed gradient as the regression target to weak learner $\mathcal{A}^i$ (Line~\ref{line:feed}). The weak learner $\mathcal{A}^i$ then performs its own no-regret online update to compute $h_{i}^{t+1}$ (Line~\ref{line:weak_learner_update}).

Line~\ref{line:average} and~\ref{line:stoch_return} are needed for stochastic setting. We compute the average $\bar{h}_i$ for every weak learner $\mathcal{A}_i$ in Line~\ref{line:average}. In testing time, given $x\sim D$, we predict $y$ as:
\begin{align}
\label{eq:predict_avg}
y = y_0 - \eta\sum_{i=1}^N \bar{h}_i(x). 
\end{align}

Since we penalize the weak learners by the squared deviation of its own prediction and the gradient from the previous partial sum, we essentially force weak learners to produce predictions that are close to the gradients (in a no-regret perspective). With this perspective, \algshort can be understood as using the weak learners' predictions as $N$ gradient descent steps where the gradient of each step $i$ is approximated by a weak learner's prediction (Line~\ref{line:GD}). Let us define $\Delta_0 = \sum_{t=1}^T (\ell_t(y_t^0) - \ell_t(f^*(x_t)))$, for any $f^*\in\mathcal{F}$. Namely $\Delta_0$ measures the performance of the initialization $\{y_t^0\}_t$. Under our assumption that the loss is bounded, $|\ell_t(x)|\leq B,\forall t,x$, we can simply upper bound $\Delta_0$ as $\Delta_0\leq 2BT$.  Alg.~\ref{alg:online_gradient_boost} has the following performance guarantee:
\begin{theorem}
\label{them:smooth_strongly_convex}
Assume weak learner $\mathcal{A}_i,\forall i$ has weak online learning edge $\gamma\in(0,1]$. Let $f^* = \argmin_{f\in\mathcal{F}}\sum_t\ell_t(f(x_t))$. There exists a $\eta = \frac{\gamma}{\beta(8-4\gamma)}$, for $\lambda$-strongly convex and $\beta$-smooth loss functions, $\ell _t$, such that when $T\to\infty$, Alg.~\ref{alg:online_gradient_boost} generates a sequence of predictions $\{y_t\}_t$ where:
\begin{align}
\label{eq:regret_bound_smooth}
&\frac{1}{T}[\sum_{t=1}^T\ell_t(y_t) -\sum_{t=1}^T\ell_t(f^*(x_t))]
\leq 2B(1 - \frac{\gamma^2\lambda}{16\beta})^N. 
\end{align} For stochastic setting where $(x_t, \ell_t)\sim D$ independently, we have when $T\to\infty$:
\begin{align}
\label{eq:smooth_result_stoch}
\mathbb{E}\big[\ell\big(y_0-\eta\sum_{i=1}^N \bar{h}_i(x)\big) -  \ell(f^*(x)) \big]\leq 2B(1-\frac{\gamma^2\lambda}{16\beta})^N.
\end{align}
\end{theorem}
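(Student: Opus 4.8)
The plan is to treat the inner loop over the $N$ weak learners (Line~\ref{line:GD}) as $N$ successive functional gradient descent steps and to show that each step contracts the cumulative suboptimality by a constant factor. Define $\Delta_i = \sum_{t=1}^T (\ell_t(y_t^i) - \ell_t(f^*(x_t)))$, so that $\Delta_0$ is the quantity already introduced and $\Delta_N = \sum_t \ell_t(y_t) - \sum_t \ell_t(f^*(x_t))$ is the cumulative regret we wish to bound. The goal is to establish a recursion of the form $\Delta_i \le (1 - \frac{\gamma^2\lambda}{16\beta})\Delta_{i-1} + O(R(T))$ and then unroll it over $i = 1, \dots, N$.

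First I would bound the one-step progress at layer $i$ using $\beta$-smoothness applied to $y_t^i = y_t^{i-1} - \eta h_i^t(x_t)$, which gives $\ell_t(y_t^i) \le \ell_t(y_t^{i-1}) - \eta (\nabla_i^t)^{\top} h_i^t(x_t) + \frac{\beta\eta^2}{2}\|h_i^t(x_t)\|^2$ after recognizing $\nabla_i^t = \nabla\ell_t(y_t^{i-1})$ (Line~\ref{line:gradient}). Summing over $t$, I would control the two right-hand terms separately: the inner-product term is lower-bounded via Lemma~\ref{lemma:from_weak_learning} with target $\nabla_i^t$, giving $\sum_t (\nabla_i^t)^{\top} h_i^t(x_t) \ge \frac{\gamma}{2}\sum_t\|\nabla_i^t\|^2 - \frac{R(T)}{2}$, while the norm term is bounded by combining the square-loss edge (Eqn.~\ref{eq:weak_learner_eq}) with $\|h\|^2 \le 2\|\nabla_i^t\|^2 + 2\|\nabla_i^t - h\|^2$ to get $\sum_t \|h_i^t(x_t)\|^2 \le (4-2\gamma)\sum_t\|\nabla_i^t\|^2 + 2R(T)$. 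The choice $\eta = \frac{\gamma}{\beta(8-4\gamma)}$ is engineered precisely so that the net coefficient on $\sum_t\|\nabla_i^t\|^2$ collapses to $-\frac{\eta\gamma}{4}$, yielding $\sum_t \ell_t(y_t^i) \le \sum_t\ell_t(y_t^{i-1}) - \frac{\eta\gamma}{4}\sum_t\|\nabla_i^t\|^2 + O(R(T))$.

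Next I would convert the gradient-norm term into suboptimality using the Polyak--{\L}ojasiewicz (gradient domination) inequality implied by $\lambda$-strong convexity: since $f^*(x_t)$ is no better than the unconstrained minimizer of $\ell_t$, one has $\ell_t(y_t^{i-1}) - \ell_t(f^*(x_t)) \le \frac{1}{2\lambda}\|\nabla_i^t\|^2$, hence $\sum_t\|\nabla_i^t\|^2 \ge 2\lambda\,\Delta_{i-1}$. Substituting this and subtracting $\sum_t\ell_t(f^*(x_t))$ from both sides gives $\Delta_i \le (1 - \frac{\eta\gamma\lambda}{2})\Delta_{i-1} + O(R(T))$, and since $\gamma\in(0,1]$ the contraction factor satisfies $1 - \frac{\eta\gamma\lambda}{2} = 1 - \frac{\gamma^2\lambda}{\beta(16-8\gamma)} \le 1 - \frac{\gamma^2\lambda}{16\beta}$. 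Unrolling over the $N$ layers produces $\Delta_N \le (1-\frac{\gamma^2\lambda}{16\beta})^N\Delta_0$ plus a geometric accumulation of the $R(T)$ terms; dividing by $T$, using $\Delta_0 \le 2BT$, and letting $T\to\infty$ so that every $R(T)/T \to 0$ yields Eqn.~\ref{eq:regret_bound_smooth}.

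Finally, for the stochastic bound (Eqn.~\ref{eq:smooth_result_stoch}) I would invoke the online-to-batch reduction. Writing the test predictor as $y_0 - \eta\sum_i\bar h_i(x) = \frac{1}{T}\sum_t(y_0 - \eta\sum_i h_i^t(x))$ and applying Jensen's inequality to the convex loss $\ell$ bounds its risk by the time-average of the risks of the per-step ensembles $y_0 - \eta\sum_i h_i^t(\cdot)$; since each $h_i^t$ is measurable with respect to the samples strictly before round $t$, a fresh-sample argument identifies each such risk with $\mathbb{E}[\ell_t(y_t)]$, so the averaged risk is at most $\frac{1}{T}\sum_t\mathbb{E}[\ell_t(y_t)]$. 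Relating the comparator $f^*$ to its population counterpart through the same reduction and combining with the regret bound of the previous paragraph gives Eqn.~\ref{eq:smooth_result_stoch}. I expect the main obstacle to be the one-step descent algebra of the second paragraph: the edge assumption constrains only the squared error $\|\nabla_i^t - h_i^t(x_t)\|^2$, so simultaneously extracting a useful lower bound on the descent inner product and an upper bound on $\|h_i^t(x_t)\|^2$, and then tuning $\eta$ so they balance into a clean contraction, is the delicate part; a secondary subtlety is tracking the $N$-fold accumulation of the excess-loss terms $R(T)$ so that they still vanish under time-averaging as $T\to\infty$.
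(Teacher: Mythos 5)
Your proposal is correct and follows essentially the same route as the paper's proof: the same recursion on $\Delta_i$ via $\beta$-smoothness, the same two edge-based bounds (the inner-product lower bound from Lemma~\ref{lemma:from_weak_learning} and the bound $\sum_t\|h_i^t(x_t)\|^2\leq(4-2\gamma)\sum_t\|\nabla_i^t\|^2+2R(T)$, which is exactly the paper's Lemma~\ref{lemma:helper_1}), the same gradient-domination step from strong convexity, and the same online-to-batch/Jensen argument for the stochastic bound. The only cosmetic difference is that the paper derives $\eta=\frac{\gamma}{\beta(8-4\gamma)}$ by minimizing the contraction factor over $\eta$, whereas you plug it in and verify the coefficient collapses to $-\frac{\eta\gamma}{4}$; the resulting constants agree.
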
 
The expectation in Eqn.~\ref{eq:smooth_result_stoch} of the above theorem is taken over the randomness of the sequence of pairs of loss and samples $\{\ell_t, x_t\}_{t=1}^T$ (note that $\bar{h}_i$ is dependent on $\ell_1,x_1,...,\ell_{T},x_{T}$) and $\ell, x$. 
Theorem~\ref{them:smooth_strongly_convex} shows that with infinite amount samples the average regret decreases exponentially as we increase the number of weak learners. 
This performance guarantee is very similar to classic batch boosting algorithms \citep{schapire2012boosting,grubb2011generalized}, where the empirical risk decreases exponentially with the number of algorithm iterations, i.e., the number of weak learners. 
Theorem~\ref{them:smooth_strongly_convex} mirrors that of Theorem 1 in \citep{beygelzimer2015online}, which bounds the regret of the Frank-Wolfe-based Online Gradient Boosting algorithm. Our results utilize the additional assumptions that the losses $\ell_t$ are strongly convex and that the weak learners have edge, allowing us to shrink the average regret exponentially with respect to N, while the average regret in \citep{beygelzimer2015online} shrinks in the order of $1/N$ (though this dependency on $N$ is optimal under their setting).

Proof of Theorem~\ref{them:smooth_strongly_convex}, detailed in Appendix~\ref{sec:proof_smooth_strongly_convex}, weaves our additional assumptions 
into the proof framework of gradient descent on smooth losses. In particular, using weak learner edge assumption, we derive  
Lemma~\ref{lemma:from_weak_learning} and the Lemma~\ref{lemma:helper_1} to relate parts of the strong smoothness expansion of the losses to the norm-squared of the gradients $\Vert \nabla \ell _t(y_t^i) \Vert^2$, which is an upper bound of \mbox{$2\lambda (\ell_t(y_t^i) - \ell_t(f^*(x_t)))$} due to strong convexity. Using this observation,
we can relate the total regret of the ensemble of the first $i$ learners, 
\mbox{$\Delta_i = \sum_{t=1}^T (\ell_t(y_t^i) - \ell_t(f^*(x_t)))$}, with the regret from using $i+1$ learners, $\Delta_{i+1}$, and show that $\Delta_{i+1}$ shrinks $\Delta_i$ by a constant fraction while only adding a small term $O(R(T)) \in o(T)$. Solving the recursion on the sequence of $\Delta_{i}$, we arrive at the final exponentially decaying regret bound in the number of learners. 

\paragraph{Remark} Due to the weak online learning edge assumption, the regret bound shown in Eqn.~\ref{eq:regret_bound_smooth} and the risk bound shown in Eqn.~\ref{eq:smooth_result_stoch} are stronger than typical bounds in classic online learning, in a sense that we are competing against $f^*$ that could potentially be much more powerful than any hypothesis from $\mathcal{H}$. For instance when the loss function is square loss $\ell(f(x)) = \|f(x) - z\|^2$, Theorem~\ref{them:smooth_strongly_convex} essentially shows that the risk of the boosted hypothesis $\mathbb{E}[\|y_0-\eta\sum_{i=1}^N\bar{h}_i(x) - z\|^2]$ approaches to zero as $N$ approaches to infinity, under the assumption that $\mathcal{A}_i,\forall i$ have no-zero weak learning edge (e.g.,$f^* \in span(\mathcal{H})$). Note that this is analogous to the results of classification based batch boosting \citep{freund1995desicion,grubb2011generalized} and online boosting \citep{beygelzimer2015optimal}: as number of weak learners increase, the average number of prediction mistakes approaches to zero. In other words, with the corresponding edge assumptions, these batch/online boosting classification algorithms can compete against any arbitrarily powerful classifier that always makes zero mistakes on any given training data.

\subsection{Non-smooth Loss Functions}
\label{subsec:non-smooth}
The regret bound shown in Theorem~\ref{them:smooth_strongly_convex} only applies for strongly convex and smooth loss functions. In fact, one can show that Alg.~\ref{alg:online_gradient_boost} will fail for general non-smooth loss functions. We can construct a sequence of non-smooth loss functions and a special weak hypothesis class $\mathcal{H}$, which together show that the regret of Alg.~\ref{alg:online_gradient_boost} grows linearly in the number of samples, regardless of the number of weak learners. We refer readers to Appendix~\ref{sec:counter_example} for more details. 

\begin{algorithm}[tb]
\caption{\algname (\algshort) for non-smooth loss (Residual Projection)}
 \label{alg:online_gradient_boost_non_smooth_residual}
\begin{algorithmic}[1]
  \STATE {\bfseries Input:} A restricted  class $\mathcal{H}$. $N$ online weak learners $\{\mathcal{A}_i\}_{i=1}^N$. Learning rate schedule $\{\eta_i\}_{i=1}^N$.
  \STATE  $\forall i,\mathcal{A}_i$ initializes a hypothesis $h_{i}^1 \in \mathcal{H}$.
  \FOR {t = 1 to T}
    \STATE Receive $x_t$ and initialize $y_t^{0} = y_0$ (e.g., $y_{0} = 0$). 
    \FOR {i = 1 to N}
    \label{line:GD}
        \STATE Set the projected partial sum $y_t^{i} = \Pi_{\mathcal{Y}}(y_t^{i-1} - \eta_i h_i^t(x_t)$).
    \ENDFOR
    \STATE Predict ${y}_t = \frac{1}{N}\sum_{i=0}^N {y}_t^i$
    \STATE The loss $\ell_t$ is revealed and compute loss $\ell_t(y_t)$.
    \STATE Set initial residual $\Delta_0^t = 0$. 
    \FOR {i = 1 to N}
        \STATE Compute subgradient w.r.t. partial sum: 
            $\nabla_i^t = \nabla\ell_t(y_t^{i-1})$. 
        \STATE Feed loss $\big\|(\Delta^t_{i-1} + \nabla^t_i) - h(x)\big\|^2$ to $\mathcal{A}^i$.
        \label{line:feed_residual_loss}
        \STATE Update residual: $\Delta_i^t = \Delta^t_{i-1} + \nabla^t_i - h_i^t(x_t)$.
        \label{line:residual_update}
        \STATE Weak learner $\mathcal{A}^i$ computes $h_i^{t+1}$ using its no-regret update procedure.
    \ENDFOR
 \ENDFOR
 \STATE {\bfseries Return: $h_t^i, 1\leq i\leq N, 1\leq t\leq T$.\label{line:stoch_return_non_smooth}}
\end{algorithmic}
\end{algorithm}

\begin{algorithm}[tb]
\caption{\algshort (Residual Projection) for testing}
 \label{alg:online_gradient_boost_non_smooth_residual_test}
\begin{algorithmic}[1]
  \STATE {\bfseries Input:} Test sample $x$ and $h_t^i, 1\leq i\leq N, 1\leq t\leq T$ from the output of Alg.~\ref{alg:online_gradient_boost_non_smooth_residual}.
  \FOR {t = 1 to T}
    \FOR {i = 1 to N} \label{line:for_loop}
        \STATE $y_t^{i} = \Pi_{\mathcal{Y}}(y_t^{i-1} - \eta_i h_i^t(x)$).
    \ENDFOR
    \STATE $y_t = \frac{1}{N}\sum_{i=0}^N y_t^i$. 
  \ENDFOR
  \STATE {\bfseries Predict: $y = \mathcal{T}(x) = \frac{1}{T}\sum_{t=1}^T y_t$}.  \label{line:stoch_return_non_smooth}
\end{algorithmic}
\end{algorithm}

Our next algorithm, Alg.~\ref{alg:online_gradient_boost_non_smooth_residual}, extends \algshort (Alg.~\ref{alg:online_gradient_boost}) to handle strongly convex but non-smooth losses. Instead of training each weak learner to fit the subgradients of non-smooth loss with respect to current prediction, we instead keep track of a residual $\Delta_i$\footnote{Note the abusive notation. For the non-smooth loss setting (Alg.~\ref{alg:online_gradient_boost_non_smooth_residual}), $\Delta_i$ does not refer to the regret of the ensemble's regret with the $i$-th as used in the analysis of Alg.~\ref{alg:online_gradient_boost}} that accumulates the difference between the subgradients, $\nabla_k$, and the fitted prediction $h_k(x_t)$, from $k=1$ up to $i-1$. Instead of fitting the predictor $h_{i+1}$ to match the subgradient $\nabla_{i+1}$, we fit it to match the sum of the subgradient and the residuals, $\nabla_{i+1} + \Delta_{i}$. More specifically, in Line~\ref{line:feed_residual_loss} of Alg.~\ref{alg:online_gradient_boost_non_smooth_residual}, for each weak learner $\mathcal{A}^i$, we feed a square loss with the sum of residual and the gradient as the regression target. Then Line~\ref{line:residual_update} sets the new the residual $\Delta_i^t$ as the difference between the target $(\Delta_{i-1}^t + \nabla_i^t)$ and the weak learner $\mathcal{A}^i$'s prediction $h_i^t(x_t)$.

The last line of Alg.~\ref{alg:online_gradient_boost_non_smooth_residual} is needed for stochastic setting where $(\ell_t, x_t)\sim D$ i.i.d. In test, given sample $x\sim D$, we predict $y$ using $h_t^i, \forall i,t$ in procedure shown in Alg.~\ref{alg:online_gradient_boost_non_smooth_residual_test}. For notation simplicity, we denote the testing procedure shown in Alg.~\ref{alg:online_gradient_boost_non_smooth_residual_test} as $\mathcal{T}(x)$, which $\mathcal{T}$ explicitly depends on the returns $h_t^i, 1\leq i\leq N, 1\leq t\leq T$ from \algshort (Residual Projection). Since it's impractical to store and apply all $TN$ models, we follow a common stochastic learning technique which uses the final predictor at time $T$ for testing (e.g., \cite{SVRG}) in the experiment section (i.e., simply set $t = T$ in Line~\ref{line:for_loop} in Alg.~\ref{alg:online_gradient_boost_non_smooth_residual_test}). 
In practice, if the learners converge and $T$ is large, the average and final predictions are close.

Intuitively, this approach prevents the weak learners from consistently failing to match a certain direction of the subgradient as the net error in the direction is stored in residual. By the assumption of weak learner edge, the directions will be approximated. We also note that if we assume the subgradients are bounded, then the residual magnitudes increase at most linearly in the number of weak learners. Simultaneously, each weak learner shrinks the residual by at least a constant factor due to the assumption of edge. Hence, we expect the residual to shrink exponentially in the number of learners. Utilizing this observation, we arrive at the following performance guarantee:
\begin{theorem}
\label{them:non_smooth_strongly_convex}
Assume the loss
$\ell_t$ is $\lambda$-strongly convex for all $t$ with bounded gradients, $\Vert \nabla \ell_t(y) \Vert \leq G$ for all $y$, and each weak learner $\mathcal{A}_i$ has edge $\gamma\in(0,1]$. Let $\mathcal{F}$ be a function space, and $\mathcal{H} \subset \mathcal{F}$ be a restriction of $\mathcal{F}$ 
Let 
$f^* = \argmin _{f \in \mathcal{F}} \frac{1}{T} \sum _{t=1}^T \ell_t(f(x_t))$ be the 
optimal predictor in $\mathcal{F}$ in hindsight. Let $c = \frac{2}{\gamma} -1$. Let
step size be $\eta_i = \frac{1}{\lambda i}$. When $T\to \infty$, we have:
\begin{align}
&\frac{1}{T} \sum_{t=1}^T \left( \ell_t(y_t) - \ell_t(f^*(x_t)) \right) \leq \frac{4c^2G^2}{\lambda N} (1 + \ln N + \frac{1}{8N}). \label{eq:nonsmooth_result}
\end{align}
For stochastic setting where $(x_t, \ell_t)\sim D$ independently, when $T\to\infty$ we have:
\begin{align}
\label{eq:non_smooth_result_stoch}
\mathbb{E}\big[\ell(\mathcal{T}(x)) -  \ell(f^*(x)) \big]\leq \frac{4c^2G^2}{\lambda N} (1 + \ln N + \frac{1}{8N}).  \nonumber
\end{align}
\end{theorem}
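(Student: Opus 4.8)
The plan is to read the inner loop (over $i$) of Alg.~\ref{alg:online_gradient_boost_non_smooth_residual} as $N$ steps of projected online gradient descent on the single strongly convex loss $\ell_t$ with step sizes $\eta_i = 1/(\lambda i)$, except that the true subgradient $\nabla_i^t = \nabla\ell_t(y_t^{i-1})$ is replaced by the weak learner's output $h_i^t(x_t)$. The residual bookkeeping $\Delta_i^t = \Delta_{i-1}^t + \nabla_i^t - h_i^t(x_t)$ makes $h_i^t = \nabla_i^t + (\Delta_{i-1}^t - \Delta_i^t)$, so each descent direction equals the true gradient plus a telescoping residual increment. Since the final prediction $y_t$ is the average of the partial sums, convexity (Jensen) gives $\ell_t(y_t) - \ell_t(f^*(x_t)) \le \frac1N\sum_i \big(\ell_t(y_t^{i-1}) - \ell_t(f^*(x_t))\big)$, so it suffices to bound the averaged per-step regret and then sum over $t$ and divide by $T$.

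First I would establish a uniform bound on the residual. Applying the weak online learning edge (Eqn.~\ref{eq:weak_learner_eq}) to the target sequence $\tau_i^t := \Delta_{i-1}^t + \nabla_i^t$ fed to $\mathcal{A}^i$, and noting that the incurred square loss is exactly $\|\Delta_i^t\|^2$, gives $\sum_t \|\Delta_i^t\|^2 \le (1-\gamma)\sum_t\|\tau_i^t\|^2 + R(T)$. Writing $r_i := (\frac1T\sum_t\|\Delta_i^t\|^2)^{1/2}$ and letting $T\to\infty$ to kill the $R(T)/T \in o(1)$ term (this is precisely why the theorem is asymptotic in $T$), Minkowski's inequality $\|\tau_i\|_{\mathrm{RMS}} \le \|\Delta_{i-1}\|_{\mathrm{RMS}} + \|\nabla_i\|_{\mathrm{RMS}} \le r_{i-1} + G$ yields the recursion $r_i \le \sqrt{1-\gamma}\,(r_{i-1}+G)$ with $r_0 = 0$. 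Since $\sqrt{1-\gamma}\le 1-\gamma/2$ and $(1-\gamma/2)(c+1) = c$ for $c = 2/\gamma - 1$, a one-line induction gives the key uniform bound $r_i \le cG$ for all $i$; consequently $\|\tau_i\|_{\mathrm{RMS}}$ and $\|h_i\|_{\mathrm{RMS}}$ are $O(cG)$ as well.

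Next I would run the standard strongly convex OGD potential argument per sample $t$. Non-expansiveness of $\Pi_{\mathcal{Y}}$ and $f^*(x_t)\in\mathcal{Y}$ give $\langle h_i^t, y_t^{i-1}-f^*(x_t)\rangle \le \frac{\|y_t^{i-1}-f^*(x_t)\|^2 - \|y_t^i - f^*(x_t)\|^2}{2\eta_i} + \frac{\eta_i}{2}\|h_i^t\|^2$; substituting $h_i^t = \nabla_i^t + (\Delta_{i-1}^t-\Delta_i^t)$, using $\lambda$-strong convexity to replace $\langle \nabla_i^t, y_t^{i-1}-f^*(x_t)\rangle$ by $\ell_t(y_t^{i-1}) - \ell_t(f^*(x_t)) + \frac{\lambda}{2}\|y_t^{i-1}-f^*(x_t)\|^2$, and summing over $i$ with $\eta_i = 1/(\lambda i)$, the quadratic terms telescope to a nonpositive quantity. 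What remains is the gradient-norm sum $\sum_i \frac{\eta_i}{2}\|h_i^t\|^2$ plus a residual-error term $\sum_i \langle \Delta_i^t - \Delta_{i-1}^t, y_t^{i-1}-f^*(x_t)\rangle$, which I would reorganize by summation by parts into $\langle \Delta_N^t, y_t^{N-1}-f^*(x_t)\rangle + \sum_i \langle \Delta_i^t, y_t^{i-1}-y_t^i\rangle$ and then bound via $\|y_t^{i-1}-y_t^i\|\le \eta_i\|h_i^t\|$. Summing over $t$, dividing by $T$, and applying Cauchy--Schwarz across $t$ turns every factor into an RMS quantity controlled by $r_i\le cG$; the diameter-type factor $\|y_t^{N-1}-f^*(x_t)\|$ is uniformly $\le 2G/\lambda$ because $\lambda$-strong convexity with $\|\nabla\ell_t\|\le G$ forces all of $\mathcal{Y}$ into a ball of radius $G/\lambda$ about $\argmin_{\mathcal{Y}}\ell_t$. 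With $\sum_{i=1}^N \eta_i = \frac1\lambda\sum_i \frac1i \le \frac{1+\ln N}{\lambda}$ and the outer $1/N$ from averaging, collecting constants yields the claimed $\frac{4c^2G^2}{\lambda N}(1+\ln N + \frac{1}{8N})$.

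Finally, the stochastic bound follows by the online-to-batch reduction used for Theorem~\ref{them:smooth_strongly_convex}: since each $h_t^i$ depends only on the i.i.d. samples seen before round $t$, taking expectations turns the per-round loss into the risk of the round-$t$ model, and averaging over $t$ together with convexity of $\ell$ (applied to the time-average defining $\mathcal{T}(x)$) and concentration as $T\to\infty$ reproduces the same rate for $\mathbb{E}[\ell(\mathcal{T}(x)) - \ell(f^*(x))]$. I expect the main obstacle to be the residual-error term: one must show that after summation by parts both the boundary contribution $\langle \Delta_N^t, y_t^{N-1}-f^*(x_t)\rangle$ and the increments $\langle \Delta_i^t, y_t^{i-1}-y_t^i\rangle$ are absorbed into the $O(\ln N / N)$ rate rather than accumulating, and it is exactly here that the uniform residual bound $r_i\le cG$ and the strong-convexity diameter bound must be combined with the right constant bookkeeping to land the precise $4c^2$ factor.
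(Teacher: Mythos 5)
Your proposal follows essentially the same route as the paper: an inductive residual bound $\sum_t\|\Delta_i^t\|^2\le c^2G^2T$ driven by the edge assumption, followed by the standard strongly-convex projected-OGD potential argument with inexact gradients $h_i^t=\nabla_i^t+(\Delta_{i-1}^t-\Delta_i^t)$, telescoping under $\eta_i=\tfrac{1}{\lambda i}$, Abel summation on the residual inner products, Cauchy--Schwarz across $t$, Jensen for the averaged iterate, and online-to-batch for the stochastic claim. Two local steps differ, both defensibly. First, you solve the residual recursion in RMS form via Minkowski and $\sqrt{1-\gamma}\le 1-\gamma/2$, which directly certifies the clean constant $c=\tfrac{2}{\gamma}-1$ used in the theorem statement; the paper instead takes $c$ to be the positive root of $\gamma c^2+(2\gamma-2)c+(\gamma-1-\tfrac{R(T)}{TG^2})=0$ and upper-bounds it by $\tfrac{2}{\gamma}-1$, absorbing $R(T)$ into $c$ rather than sending $T\to\infty$ inside the induction as you do (for fixed $N$ both are fine asymptotically). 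Second, for the terminal boundary term $\innerprod{\Delta_N^t}{f^*(x_t)-y_t^{N-1}}$ the paper completes the square against the leftover $\tfrac{1}{2\eta_N}\Vert\cdot\Vert^2$ term and drops the resulting nonnegative square, which avoids any domain-diameter argument; you instead invoke a diameter bound from strong convexity plus $\Vert\nabla\ell_t\Vert\le G$. Your bound should read $\Vert y-y^*\Vert\le 2G/\lambda$ per point (so diameter $4G/\lambda$), not radius $G/\lambda$, but since this term contributes only $O(cG^2/(\lambda N))$ it is dominated by the $O(c^2G^2\ln N/(\lambda N))$ main term either way. Your final caveat about constant bookkeeping is apt --- the paper's own constants in Lemma 7.1 (e.g.\ the claim $(4-2\gamma)(1+c)^2G^2T+2R(T)\le 4c^2G^2T$) are themselves loose --- but the structure of your argument is the correct one and yields the stated $O(\ln N/N)$ rate.
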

The above theorem shows that the average regret of Alg.~\ref{alg:online_gradient_boost_non_smooth_residual} is $O(\ln N/N)$ with respect to the number $N$ of weak learners, which matches the regret bounds of Online Gradient Descent for strongly convex loss. 
The key idea for proving Theorem~\ref{them:non_smooth_strongly_convex} is to combine our online weak learning edge definition with the proof framework of Online Gradient Descent for strongly convex loss functions from~\citep{hazan2007logarithmic}. The detailed proof can be found in Appendix~\ref{sec:proof_non_smooth_strongly_convex}.

\section{EXPERIMENTS}
\begin{figure*}[t]
    \centering
    \begin{subfigure}[l]{0.42\textwidth}
        \centering
        \includegraphics[width=0.49\textwidth,keepaspectratio]{./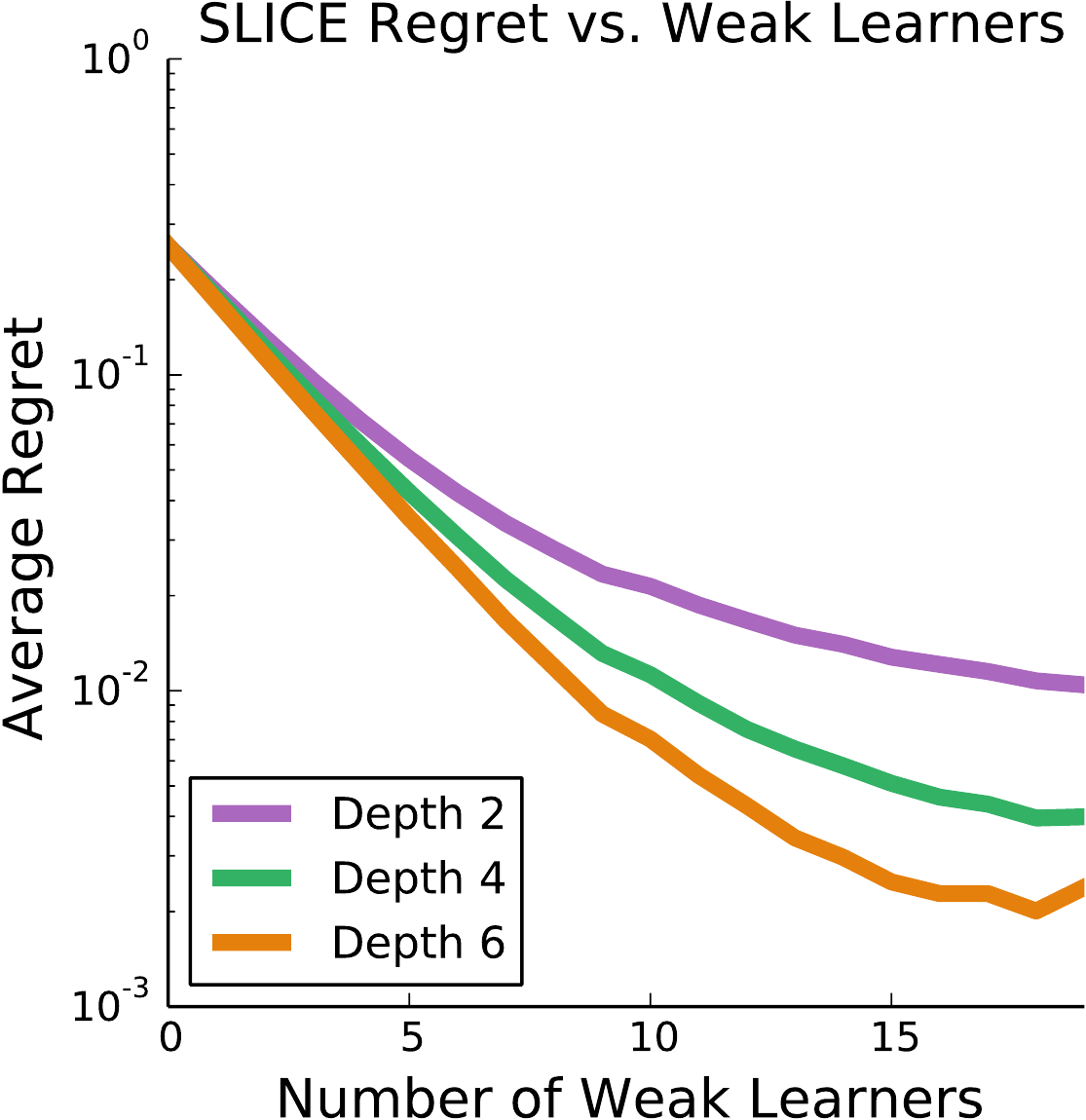}
        \includegraphics[width=0.49\textwidth,keepaspectratio]{./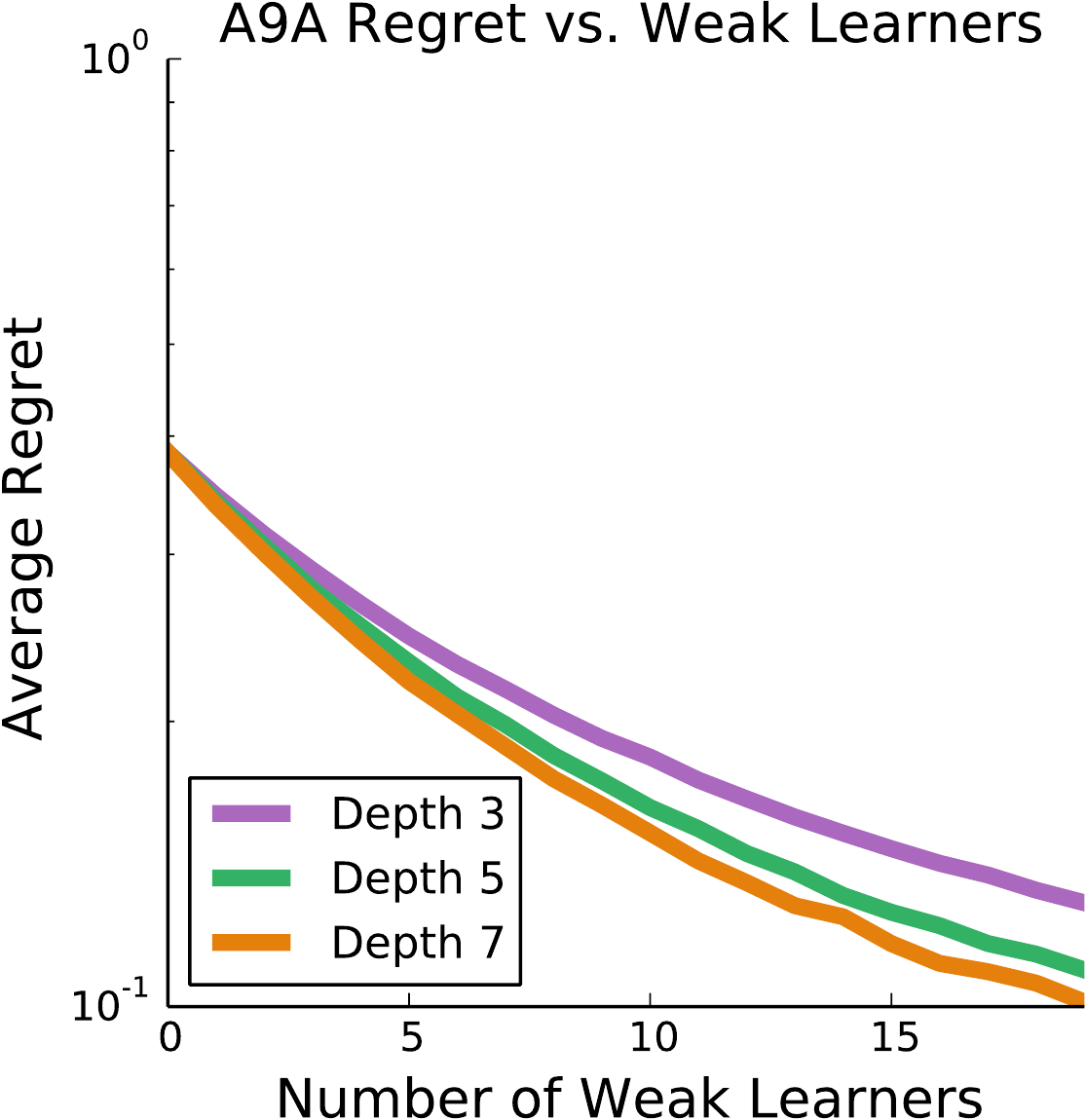}
        \caption{Regret versus number of weak learners}
        \label{fig:regret_learners}
    \end{subfigure}
    \begin{subfigure}[l]{0.42\textwidth}
        \centering
        \includegraphics[width=0.49\textwidth,keepaspectratio]{./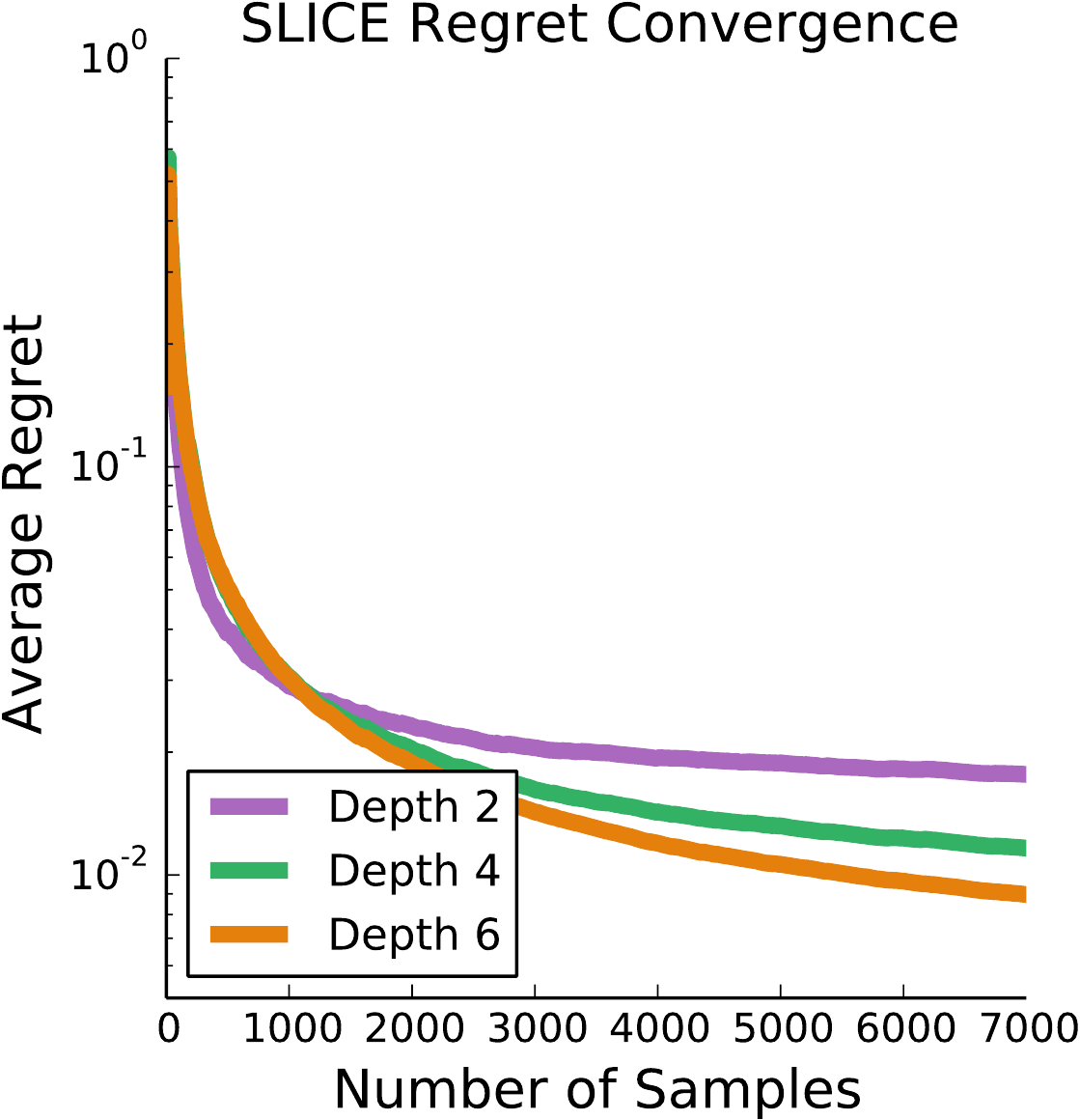}
        \includegraphics[width=0.49\textwidth,keepaspectratio]{./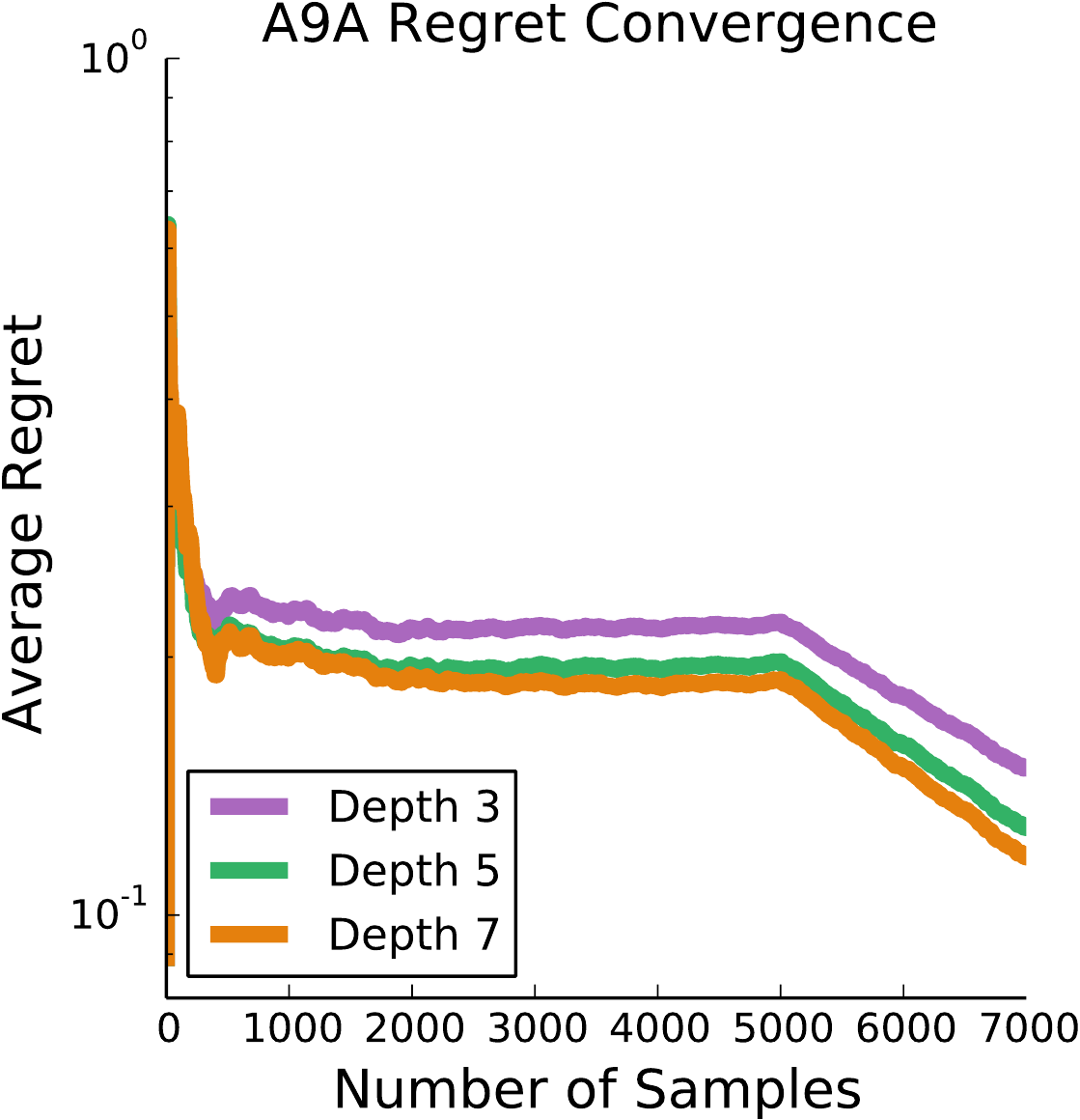}
        \caption{Regret versus number of samples seen}
        \label{fig:regret_samps}
    \end{subfigure}
    \vspace{-2pt}
    \caption{Average regret of \algshort with regression trees with various depths on SLICE and A9A datasets.}
    \label{fig:regret}
    \vspace{-5pt}
\end{figure*}

We demonstrate the performance of our \algname using the following UCI datasets~\citep{Lichman:2013}: YEAR, ABALONE, SLICE, and A9A~\citep{adult} as well as the  MNIST~\citep{mnist} dataset. If available, we use the given train-test split of each data-set. Otherwise, we create a random 90\%-10\% train-test split. 

    
    
    

\subsection{Experimental Analysis of Regret Bounds}
We first demonstrate the relationships between the regret bounds shown in Eqn.~\ref{eq:regret_bound_smooth} and the parameters including the number of weak learners, the number of samples and edge $\gamma$. We compute the regret of \algshort with respect to a deep regression tree (depth$\geq$ 15), which plays the $f^*$ in Eqn.~\ref{eq:regret_bound_smooth}. 
We use regression trees as the weak learners. We assume that deeper trees have higher edges $\gamma$ because they empirically fit training data better. We show how the regret relates to the trees' depth, the number of weak learners $N$ (Fig.~\ref{fig:regret_learners}) and the number of samples $T$ (Fig.~\ref{fig:regret_samps}).

For the experimental results shown in Fig.~\ref{fig:regret}, we used smooth loss functions with $L_2$ regularization (see Appendix~\ref{sec:implementation} for more details). We use logistic loss and square loss for binary classification (A9A) and regression task (SLICE), respectively. For each regression tree weak learner, Follow The Regularized Leader (FTRL)~\citep{shalev2011online} was used as the no-regret online update algorithm with regularization posed as the depth of the tree. Fig.~\ref{fig:regret_learners} shows the relationship between the number of weak learners and the average regret given a fixed total number of samples. The average regret decreases as we increase the number of weak learners. We note that the curves are close to linear at the beginning, matching our theoretical analysis that the average regret decays exponentially (note the y-axis is log scale) with respect to the number of weak learners. This shows that \algshort can significantly boost the performance of a single weak learner.

To investigate the effect of the edge parameter $\gamma$, we additionally compute the average regret in Fig.~\ref{fig:regret} as the depth of the regression tree is increased. The tree depth increases the model complexity of the base learner and should relate to a larger $\gamma$ edge parameter. From this experiment, we see that the average regret shrinks as the depth of the trees increases.

Finally, Fig.~\ref{fig:regret_samps} shows the convergence of the average regret with respect to the number of samples. We see that more powerful weak learners (deeper regression trees) results in faster convergence of our algorithm. We ran Alg.~\ref{alg:online_gradient_boost_non_smooth_residual} on A9A with hinge loss and SLICE with $L1$ (least absolute deviation) loss and observed very similar results as shown in Fig.~\ref{fig:regret}. 

\subsection{Batch Boosting vs. Streaming Boosting}

\begin{figure*}[t!]
    \centering
    \begin{subfigure}[l]{0.28\textwidth}
        \centering
        \includegraphics[trim={1cm 0.6cm 1cm 1cm},clip,keepaspectratio,height=2.75cm]{./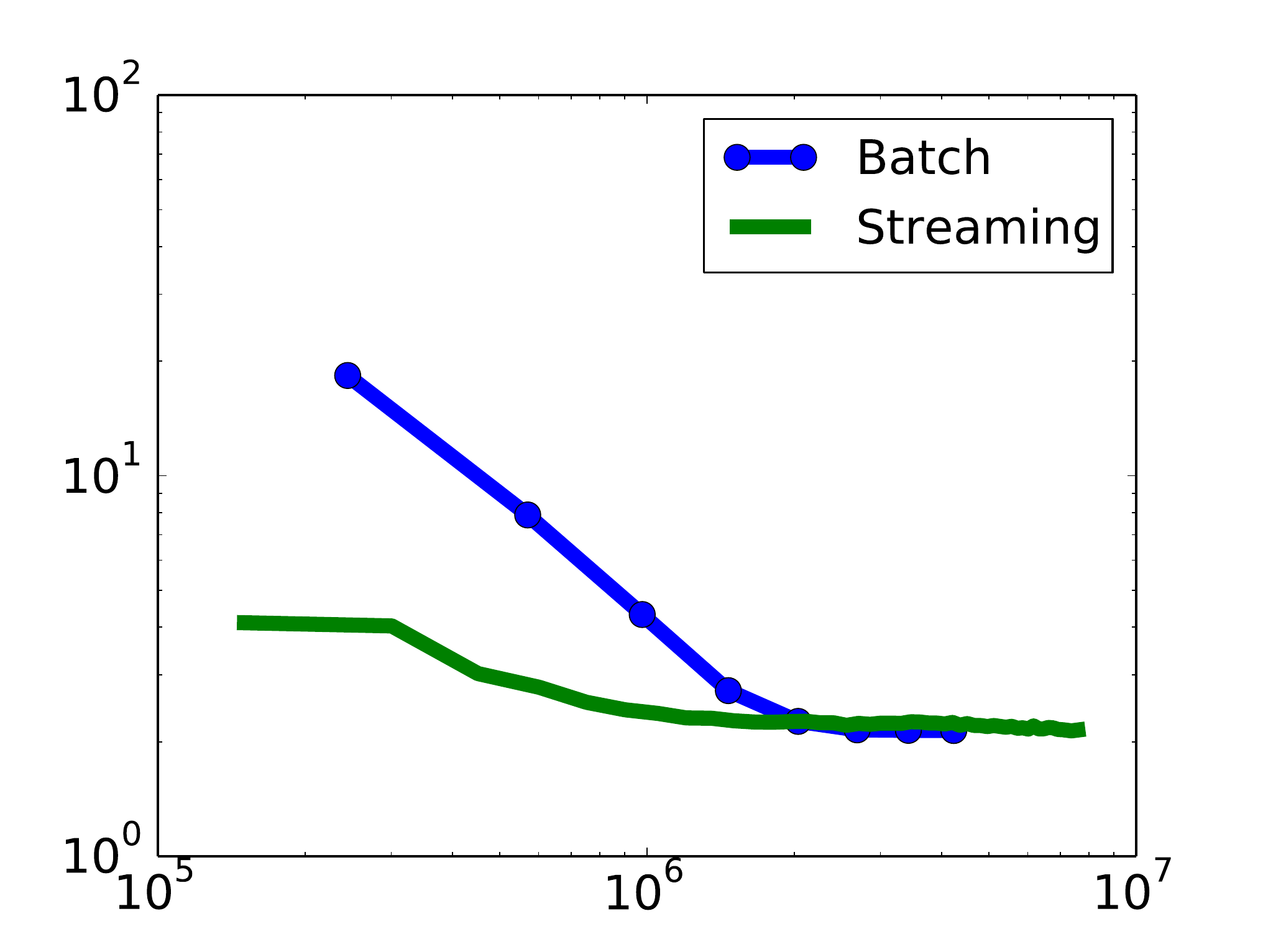}
        \caption{ABALONE N=8}
        \label{fig:abalone}
    \end{subfigure}
    \begin{subfigure}[l]{0.28\textwidth}
        \centering
        \includegraphics[trim={1cm 0.6cm 1cm 1cm},clip,keepaspectratio,height=2.75cm]{./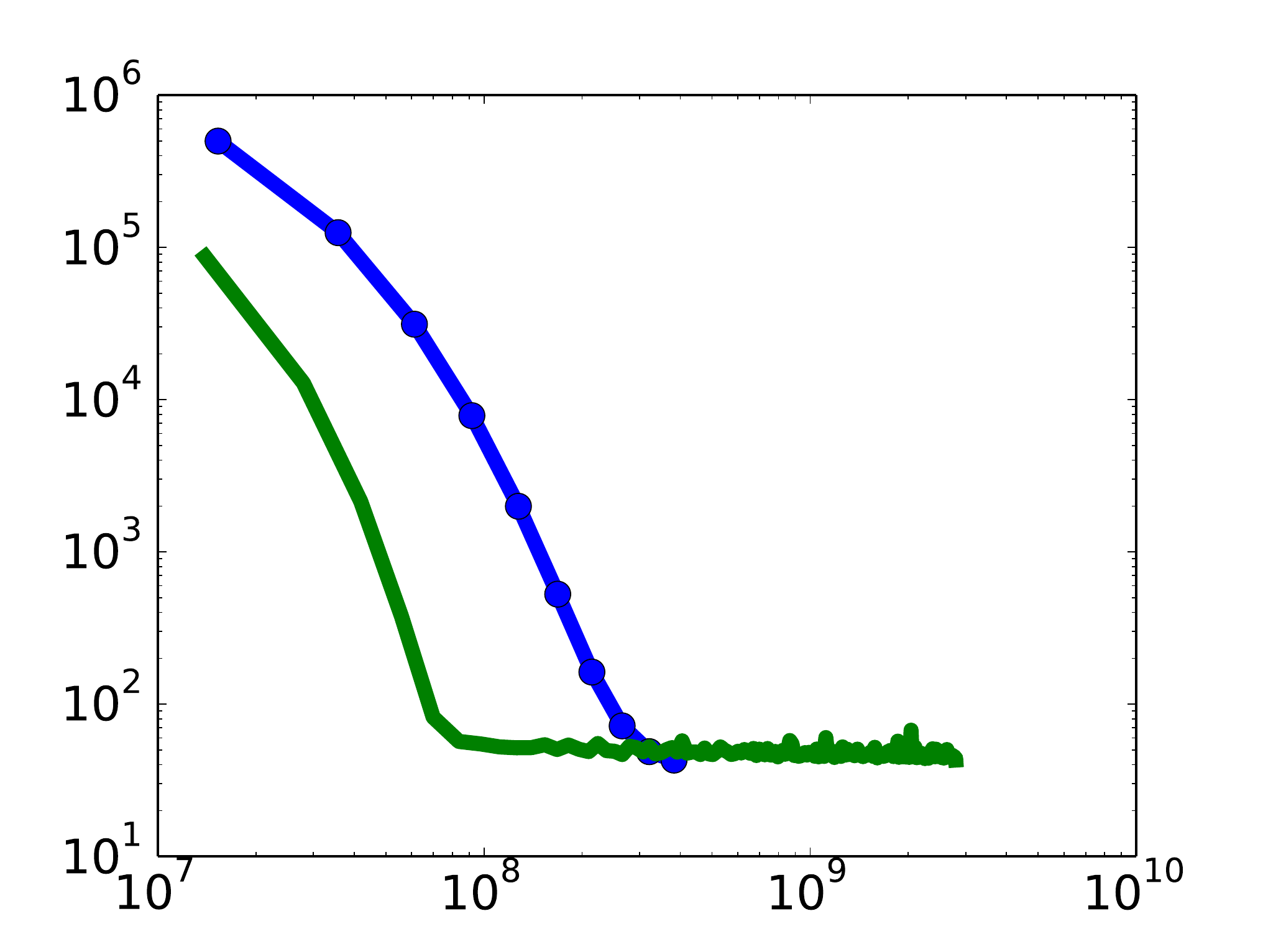}
        \caption{YEAR N=10}
        \label{fig:year}
    \end{subfigure}
    \begin{subfigure}[l]{0.28\textwidth}
        \centering
        \includegraphics[trim={1cm 0.6cm 1cm 1cm},clip,keepaspectratio,height=2.75cm]{./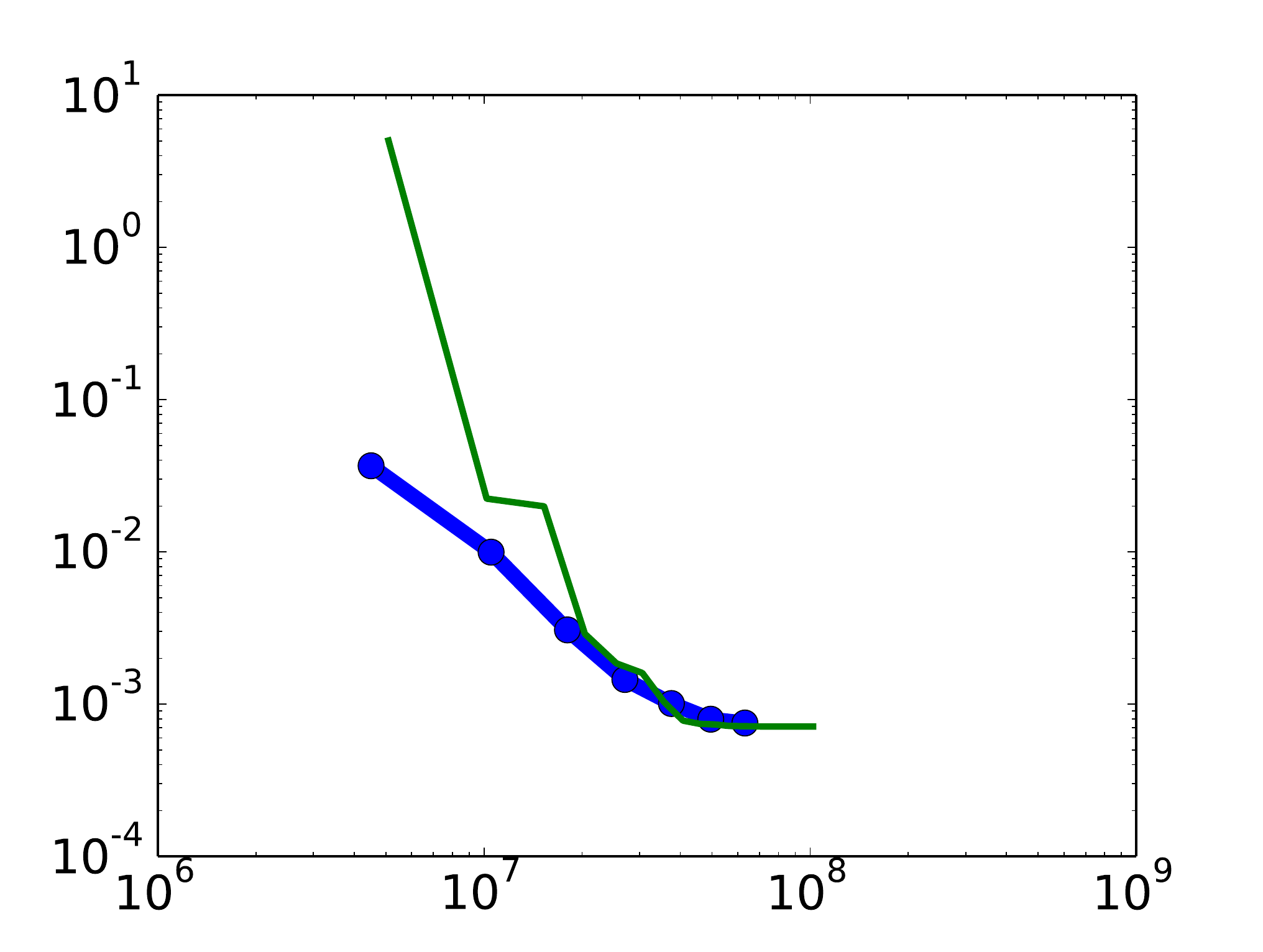}
        \caption{SLICE N=8}
        \label{fig:slice}
    \end{subfigure}
    
    \begin{subfigure}[l]{0.28\textwidth}
        \centering
        \includegraphics[trim={1cm 0.6cm 1cm 1cm},clip,keepaspectratio,height=2.75cm]{./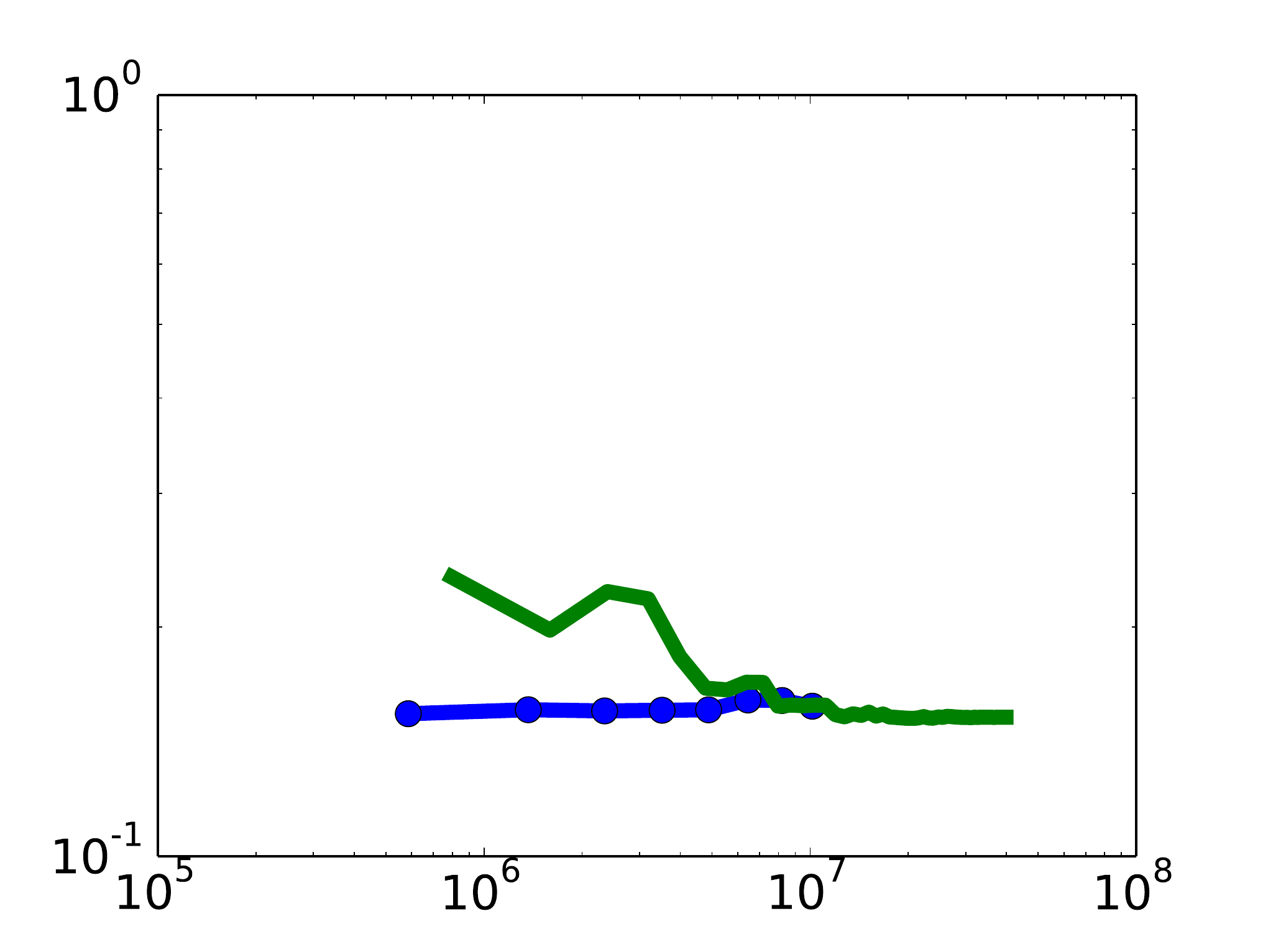}
        \caption{A9A N=4}
        \label{fig:a9a}
    \end{subfigure}
    \begin{subfigure}[l]{0.29\textwidth}
        \centering
        \includegraphics[trim={1cm 0.6cm 1cm 1cm},clip,keepaspectratio,height=2.75cm]{./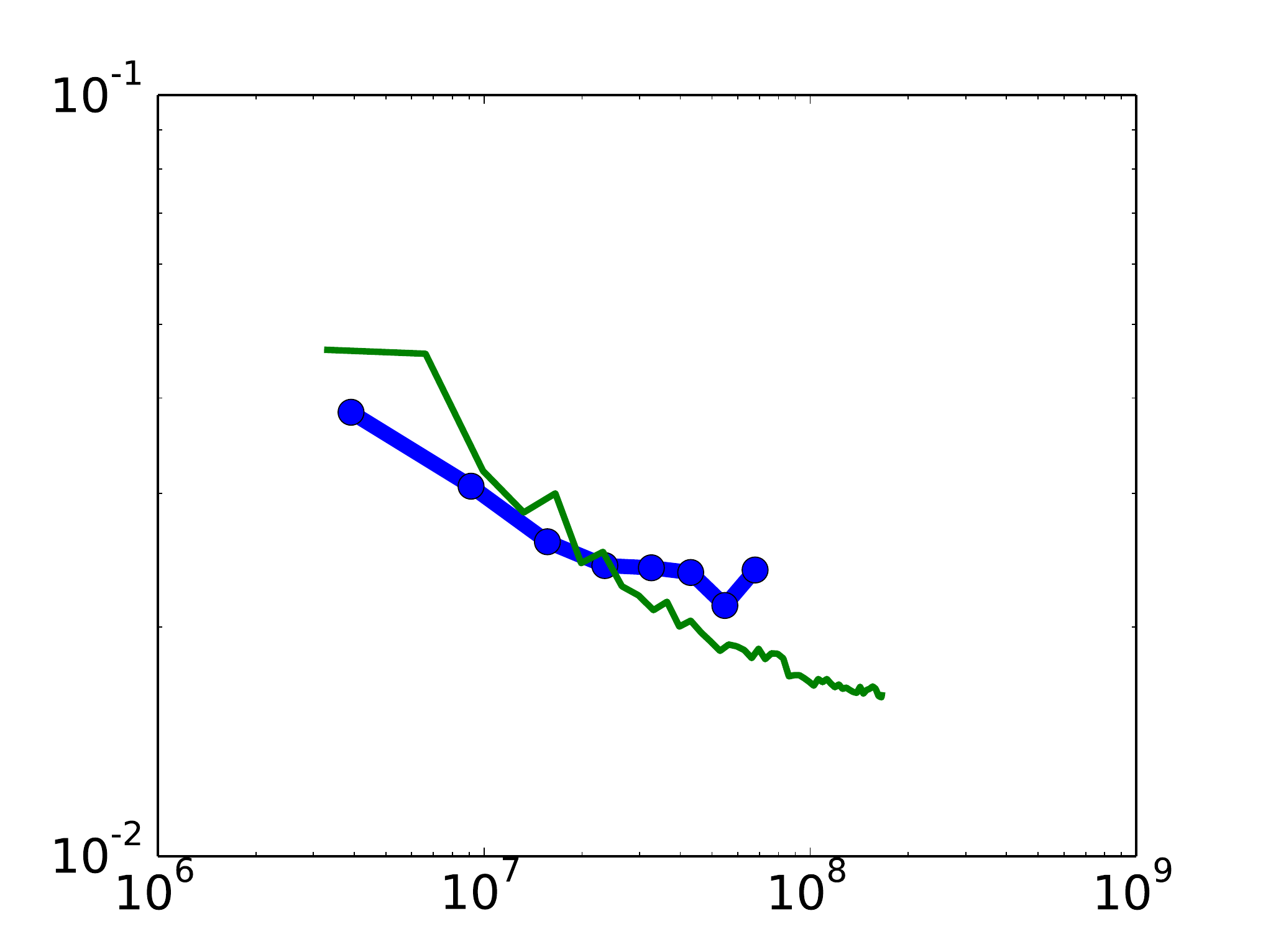}
        \caption{MNIST N=10}
        \label{fig:mnist}
    \end{subfigure}
    \begin{subfigure}[l]{0.27\textwidth}
        \centering
        \includegraphics[trim={1cm 0.6cm 1cm 1cm},clip,keepaspectratio,height=2.75cm]{./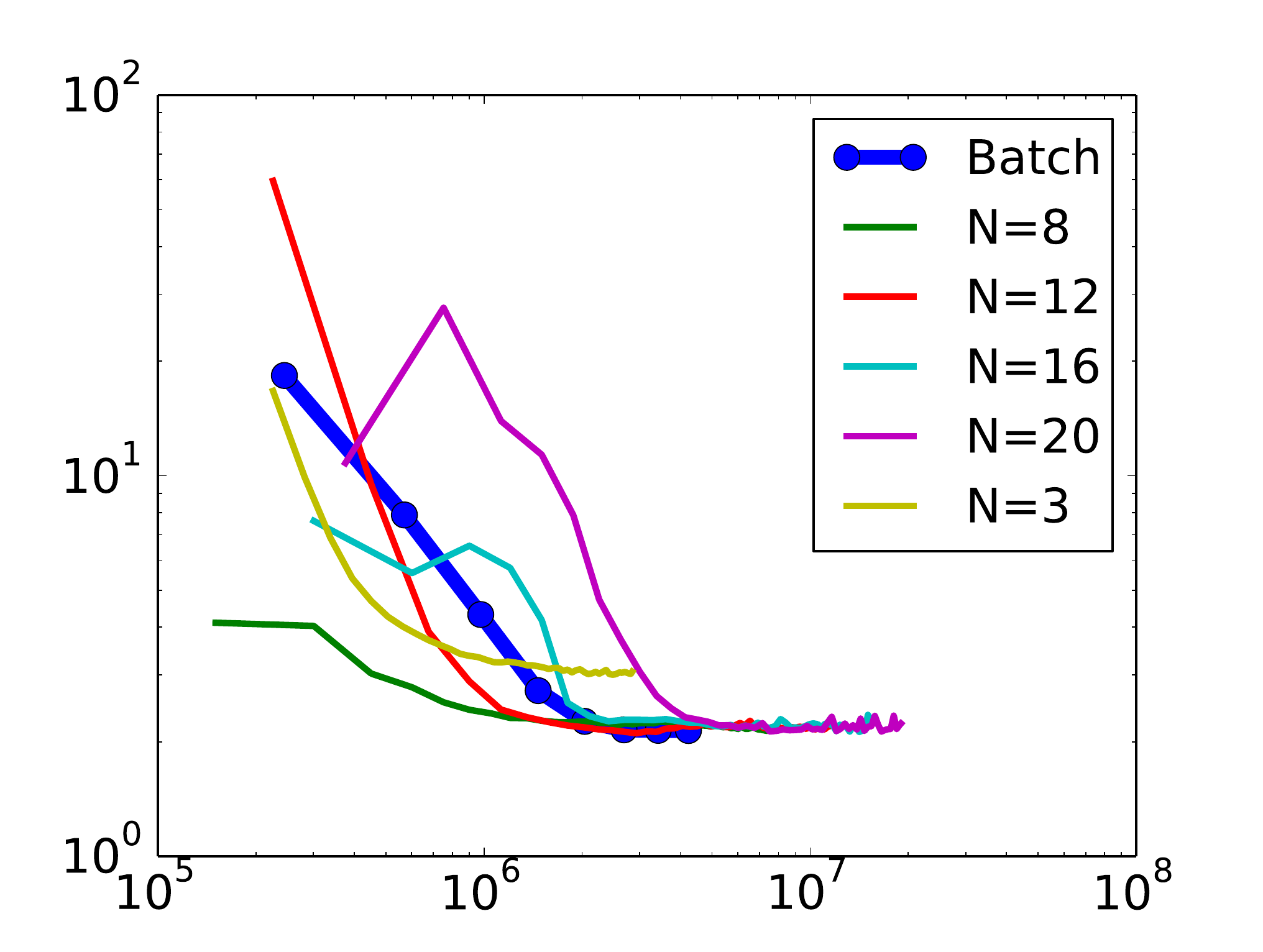}
        \caption{ABALONE with varied N}
        \label{fig:abalone_vary_n}
    \end{subfigure}
    \vspace{-5pt}
    \caption{Log-log plots of test-time loss vs. computation complexity on various data-sets. The x-axis represents computation complexity measured by number of weak leaner predictions; the y-axis measures square loss for regression tasks (ABALONE, SLICE and YEAR), and classification error for A9A and MNIST. }
    \label{fig:convg}
    \vspace{-5pt}
\end{figure*}


We next compare batch boosting to \algshort using two-layer neural networks as weak learners\footnote{The number of hidden units by data-set: ABALONE, A9A: 1; YEAR, SLICE: 10; MNIST: 5x5 convolution with stride of 2 and 5 output channels. Sigmoid is used as the activation for all except SLICE, which uses leaky ReLU. } and see that \algshort reaches  similar final performance as the batch boosting algorithm albeit with less training computation. 
As stated in Sec~\ref{subsec:non-smooth}, we report $h_T^i$ instead of $\bar{h}_i$ for \algshort, since at convergence the average prediction is close to the final prediction, and the latter is impractical to compute.  
We implement our baseline, the classic batch gradient boosting (\textbf{GB}) ~\citep{friedman2001greedy}, by optimizing each weak learner until convergence in order.
In both GB and \algshort, we train weak learners using ADAM~\citep{kingma2014adam} optimization and use the default random parameter initialization for NN.

We analyze the complexity of training \algshort and GB. We define the prediction complexity of one weak learner as the \textit{unit 
cost}, since the training run-time complexity almost equates the total complexity of weak learner predictions and updates. Our choice of weak learner and update method (two-layer networks and ADAM) determines that updating a weak learner is about two units cost. 
In training using \algshort, each of the $T$ data samples triggers predictions and updates with all $N$ of the weak learners. This results in a training computational complexity of $3TN = O(TN)$. For GB, let $T_B$ be the samples needed for each weak learner to converge. Then the complexity of training GB is $T_B \sum _{i=1}^N i +  2T_B N \simeq \frac{1}{2}T_BN^2= O(T_B N^2)$, because when training weak learner $i$, all previous $i-1$ weak learners must also predict for each data point\footnote{Saving previous predictions is disallowed, because data may not be revisited in an actual streaming setting.}. 
Hence, \algshort and GB will have the same training complexity if $T_B \simeq \frac{6T}{N} = \Theta(\frac{T}{N})$. In our experiments we observe weak learners typically converge less than $\frac{T}{N}$ samples, but our following experiment shows that \algshort still can converge faster overall.  



Fig.~\ref{fig:convg} plots the test-time loss versus training computation, measured by the unit cost.
Blue dots highlights when the weak learners are added in GB. We first note that \algshort successfully converges to the results of GB in all cases, supporting that \algshort is a truly a streaming conversion of GB. 
As it takes many weak learners to achieve good performance on ABALONE and YEAR, we observe that \algshort converges with less computation than GB. On A9A, however, GB is more computationally efficient than SGB, because the first weak learner in GB already performs well and learning a single weak learner for GB is faster than simultaneously optimizing all $N=8$ weak learners with \algshort.  
This suggests that if we initially set $N$ too big, \algshort could be less computationally efficient. In fact Fig.~\ref{fig:abalone_vary_n} shows that very larger $N$ causes slower convergence to the same final error plateau. On the other hand, small $N$ ($N=3$) results in worse performance. 
We specify the chosen $N$ for \algshort in Fig.~\ref{fig:convg}, and they are around the number of weak learners that GB requires to converge and achieve good performance. 
We also note that \algshort has slower initial progress compared to GB on SLICE in Fig.~\ref{fig:slice} and MNIST in Fig.~\ref{fig:mnist}. This is an understandable result as \algshort has a much larger pool of parameters to optimize.
Despite this initial disadvantage, \algshort surpasses GB and converges faster overall, suggesting the advantage of updating all the weak learners together.
In practice, if we do not have a good guess of $N$, we can still use SGB to add multiple weak learners at a time in GB to speed up convergence. 
Table~\ref{table:validation} records the test error (square error for regression and error ratio for classification) of the neural network base learner, GB, and \algshort. We observe that \algshort achieves test errors that are competitive with GB in all cases. 

\begin{table}[h!]
	\centering
    \ra{1.3}
    \resizebox{0.45\textwidth}{!}{
    \begin{tabular}{@{} l l l l l  @{} }
    \toprule
    & \textbf{Base} & \textbf{GB} & \textbf{\algshort}\\
    \midrule
    ABALONE (regression) & 8.2848 & 2.1411  &  2.1532\\
    YEAR (regression) & $4.99 \times 10^5$ &42.8976 & 43.0573 \\
    SLICE (regression) & 0.036045 & 0.000755 & 0.000713\\
    A9A (classification)  & 0.1547 & 0.1579 & 0.1523&\\ 
    MNIST (classification) & 0.163280 & 0.019320 & 0.016320\\  
    \bottomrule
    \end{tabular}}
    \vspace{0.5em}
    \caption{Average test-time loss: square error for regression, and error rate for classification.}
    \label{table:validation}
\end{table}

\section{CONCLUSION}
In this paper, we present \algshort for online convex programming. By introducing an online weak learning edge definition that naturally extends the edge definition from batch boosting to the online setting and by using square loss, we are able to boost the predictions from weak learners in a gradient descent fashion. Our \algshort algorithm guarantees exponential regret shrinkage in the number $N$ of weak learners for strongly convex and smooth loss functions. We additionally extend \algshort for optimizing non-smooth loss function, which achieves $O(\ln N/N)$ no-regret rate. Finally, experimental results support the theoretical analysis.

Though our \algshort algorithm currently utilizes the procedure of gradient descent to combine the weak learners’ predictions, our online weak learning definition and the design of square loss for weak learners leave open the possibility to leverage other gradient-based update procedures such as accelerated gradient descent, mirror descent, and adaptive gradient descent for combining the weak learners' predictions. 

\section*{Acknowledgements}
\vspace{-5pt}
This material is based upon work supported in part by: Echo's Grant name, DARPA ALIAS contract number HR0011-15-C-0027, and National Science Foundation Graduate Research Fellowship Grant No. DGE-1252522.

\medskip
{\small
\bibliographystyle{abbrvnat}
\bibliography{AISTATS}}

\newpage
\onecolumn
\appendix
\textbf{Supplementary Material for Gradient Boosting on Stochastic Data Streams}

\section{Proof of Proposition~\ref{prop:edge_example}}
\begin{proof}
Given that a no-regret online learning algorithm $\mathcal{A}$ running on sequence of loss $\|h(x_t) - y_t\|^2$, we have can easily see that Eqn.~\ref{eq:learnable} holds as:
\begin{align}
\sum_{t=1}^T\|h_t(x_t) - y_t\|^2\leq \min_{h\in\mathcal{H}}\sum_{t=1}^T \|h(x_t) - y_t\|^2 + R_{\mathcal{A}}(T), 
\end{align}where $R_{\mathcal{A}}(T)$ is the regret of $\mathcal{A}$ and is $o(T)$. To prove Proposition~\ref{prop:edge_example}, we only need to show that Eqn.~\ref{eq:leastsqure} holds for some $\gamma\in(0,1]$. This is equivalent to showing that there exist a hypothesis $\tilde{h}\in\mathcal{H}$ ($\|\tilde{h}\| = 1$), such that $\langle \tilde{h}, f^* \rangle > 0$. To see this equivalence, let us assume that $\langle \tilde{h}, f^*/\|f^*\| \rangle = \epsilon > 0$. Let us set $h^* = \epsilon\|f^*\| \tilde{h}$. Using Pythagorean theorem, we can see that $\| h^* - f^* \|^2 = (1-\epsilon^2)\|f^*\|^2$. Hence we get $\gamma$ is at least $\epsilon^2$, which is in $(0,1]$.

Now since we assume that $f^*\not\perp span(\mathcal{H})$, then there must exist ${h}'\in \mathcal{H}$, such that $\langle f^*, h'\rangle\neq 0$, otherwise $f^*\perp\mathcal{H}$. Consider the hypothesis $h'/\|h'\|$ and $-h'/\|h'\|$ (we assume $\mathcal{H}$ is closed under scale), we have that either $\langle h', f^*\rangle >0$ or $\langle -h', f^* \rangle >0$. Namely, we find at least one hypothesis $h$ such that $\langle h, f^*\rangle >0$ and $\|h\| = 1$.  Hence if we pick $\tilde{h} = \arg\max_{h\in\mathcal{H}, \|h\|=1} \langle h, f^*/\|f^*\| \rangle$, we must have $\langle  \tilde{h}, f^*/\|f^*\|\rangle = \epsilon >0$. Namely we can find a hypothesis $h^*\in\mathcal{H}$, which is $\epsilon\|f^*\|\tilde{h}$, such that there is non-zero $\gamma\in(0,1]$:
\begin{align}
\|h^* - f^* \|^2\leq (1-\gamma)\|f^*\|^2.
\end{align}

To show that we can extend this $\gamma$ to the finite sample case, we are going to use Hoeffding inequality to relate the norm $\|\cdot\|$ to its finite sample approximation. 

Applying Hoeffding inequality, we get with probability at least $1-\delta/2$,
\begin{align}
\label{eq:high_prob_1}
|\frac{1}{T}\sum_{t=1}^T \|y_t\|^2 - \langle f^*, f^*\rangle| \leq O\big(\sqrt{\frac{F^2}{T}\ln(4/\delta)}\Big),
\end{align} where based on assumption that $f^*(\cdot)$ is bounded as $\|f^*(\cdot)\|\leq F$. Similarly, we have with probability at least $1-\delta/2$:
\begin{align}
|\frac{1}{T}\sum_{t=1}^T \|h^*(x_t) - f^*(x_t)\|^2 - \|h^* - f^* \|^2| \leq O\Big(\sqrt{\frac{F^2}{T}\ln(4/\delta)}\Big),
\end{align}
Apply union bound for the above two high probability statements, we get with probability at least $1-\delta$,
\begin{align}
&|\frac{1}{T}\sum_{t=1}^T y_t^2 - \langle f^*, f^*\rangle| \leq O\Big(\sqrt{\frac{F^2}{T}\ln(4/\delta)}\Big), \;\;\; and,\nonumber\\
&|\frac{1}{T}\sum_{t=1}^T (h^*(x_t) - f^*(x_t))^2 - \|h^* - f^* \|| \leq O\Big(\sqrt{\frac{F^2}{T}\ln(4/\delta)}\Big). 
\end{align} Now to prove the theorem, we proceed as follows:
\begin{align}
&\frac{1}{T}\sum_{t=1}^T \|h^*(x_t) - f^*(x_t)\|^2 \nonumber\\
&\leq  \|h^* - f^* \| + O\Big(\sqrt{\frac{F^2}{T}\ln(4/\delta)}\Big)\nonumber\\
&\leq (1-\gamma)\|f^*\|^2+O\Big(\sqrt{\frac{F^2}{T}\ln(4/\delta)}\Big)\nonumber\\
&\leq (1-\gamma)\frac{1}{T}\sum_{t=1}^T y_t^2 + (1-\gamma)O\Big(\sqrt{\frac{F^2}{T}\ln(4/\delta)}\Big) + O\Big(\sqrt{\frac{F^2}{T}\ln(4/\delta)}\Big).
\end{align} Hence we get with probability at least $1-\delta$:
\begin{align}
\sum_{t=1}^T \|h^*(x_t) - f^*(x_t)\|^2 \leq \sum_{t=1}^T \|y_t\|^2 + (2-\gamma)O\Big(\sqrt{ T\ln(1/\delta)}\Big).
\end{align} Set $R(T) = R_{\mathcal{A}}(T) + (2-\gamma)O\Big(\sqrt{ T\ln(1/\delta)}\Big)$, we prove the proposition. 
\end{proof}

\section{Proof of Theorem~\ref{them:smooth_strongly_convex}}
\label{sec:proof_smooth_strongly_convex}

An important property of $\lambda$-strong convexity that we will use later in the proof is that for any $x$ and $x^*=\arg\min_x l(x)$, we have:
\begin{align}
\label{eq:strong_convexity_property}
\|\nabla l(x)\|^2 \geq 2\lambda (l(x) - l(x^*)).
\end{align}We prove Eqn.~\ref{eq:strong_convexity_property} below.

From the $\lambda$-strong convexity of $l(x)$, we have:
\begin{align}
l(y) \geq l(x) + \nabla l(x)(y-x) + \frac{\lambda}{2}\|y - x\|^2.
\end{align} Replace $y$ by $x^*$ in the above equation, we have:
\begin{align}
&l(x^*) \geq l(x) + \nabla l(x)(x^* - x) + \frac{\lambda}{2}\|x^* - x\|^2 \nonumber \\
\Rightarrow & 2\lambda l(x^*) \geq 2\lambda l(x) + 2\lambda \nabla l(x)(x^* - x) + \lambda^2\|x^* - x\|^2 \nonumber \\
\Rightarrow & -2\lambda \nabla l(x)(x^* - x) - \lambda^2\|x^* - x\|^2 \geq 2\lambda (l(x) - l(x^*)) \nonumber \\
\Rightarrow &  \|\nabla l(x)\|^2 - \|\nabla l(x)\|^2-2\lambda \nabla l(x)(x^* - x) - \lambda^2\|x^* - x\|^2 \geq 2\lambda (l(x) - l(x^*)) \nonumber \\
\Rightarrow & \|\nabla l(x)\|^2 - \|\nabla l(x) + \lambda (x^* - x)\|^2 \geq 2\lambda(l(x) - l(x^*)) \nonumber \\
\Rightarrow & \|\nabla l(x)\|^2 \geq 2\lambda (l(x) - l(x^*)).
\end{align}

\subsection{Proofs for Lemma~\ref{lemma:from_weak_learning}}
\begin{proof}
Complete the square on the left hand side (LHS) of Eqn.~\ref{eq:weak_learner_eq}, we have:
\begin{align}
\sum \|y_t\|^2 - 2y_t^Th_t(x_t) + \|h_t(x_t)\|^2 \leq (1-\gamma)\sum_t \|y_t\|^2 + R(T).
\end{align} Now let us cancel the $\sum y_t^2$ from both side of the above inequality, we have:
\begin{align}
\sum -2y_t^T h_t(x_t) \leq \sum -2y_t^T h_t(x_t) + \|h_t(x_t)\|^2 \leq -\gamma\sum \|y_t\|^2 + R(T).
\end{align} Rearrange, we have:
\begin{align}
\sum 2y_t^T h_t(x_t) \geq \gamma\sum \|y_t\|^2 - R(T).
\end{align} 
\end{proof}

\subsection{Proof of Theorem~\ref{them:smooth_strongly_convex}}
We need another lemma for proving theorem~\ref{them:smooth_strongly_convex}:
\begin{lemma}
\label{lemma:helper_1}
For each weak learner $\mathcal{A}_i$, we have:
\begin{align}
\sum_t \|h_t^i(x_t)\|^2 \leq (4-2\gamma)\sum_t\|\nabla\ell_t(y_t^{i-1})\|^2 + 2R(T).
\end{align}
\end{lemma}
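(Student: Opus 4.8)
The plan is to read off the lemma directly from the weak online learning edge, once the right regression target is identified, and then convert a bound on the approximation error $\|\nabla_i^t - h_i^t(x_t)\|$ into a bound on the magnitude $\|h_i^t(x_t)\|$ using one elementary inequality. The key observation is that in Alg.~\ref{alg:online_gradient_boost}, weak learner $\mathcal{A}^i$ is fed the square loss $\|\nabla_i^t - h_i^t(x_t)\|^2$ with $\nabla_i^t = \nabla\ell_t(y_t^{i-1})$ (Lines~\ref{line:gradient}--\ref{line:feed}), so the regression target that $\mathcal{A}^i$ sees at time $t$ is exactly $\nabla\ell_t(y_t^{i-1})$. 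Instantiating Definition~\ref{def:weak_learning} (Eqn.~\ref{eq:weak_learner_eq}) with $y_t := \nabla\ell_t(y_t^{i-1})$ and $h_t := h_i^t$ therefore gives
\begin{align}
\sum_{t=1}^T \|\nabla\ell_t(y_t^{i-1}) - h_i^t(x_t)\|^2 \leq (1-\gamma)\sum_{t=1}^T \|\nabla\ell_t(y_t^{i-1})\|^2 + R(T). \nonumber
\end{align}

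Next I would bound $\|h_i^t(x_t)\|^2$ pointwise. Writing $h_i^t(x_t) = \big(h_i^t(x_t) - \nabla\ell_t(y_t^{i-1})\big) + \nabla\ell_t(y_t^{i-1})$ and applying the elementary inequality $\|a+b\|^2 \leq 2\|a\|^2 + 2\|b\|^2$ yields
\begin{align}
\|h_i^t(x_t)\|^2 \leq 2\|h_i^t(x_t) - \nabla\ell_t(y_t^{i-1})\|^2 + 2\|\nabla\ell_t(y_t^{i-1})\|^2. \nonumber
\end{align}
Summing this over $t = 1,\dots,T$ and then substituting the weak-learning bound from the first paragraph into the first sum gives
\begin{align}
\sum_{t=1}^T \|h_i^t(x_t)\|^2 &\leq 2\Big[(1-\gamma)\sum_{t=1}^T \|\nabla\ell_t(y_t^{i-1})\|^2 + R(T)\Big] + 2\sum_{t=1}^T \|\nabla\ell_t(y_t^{i-1})\|^2 \nonumber \\
&= (4 - 2\gamma)\sum_{t=1}^T \|\nabla\ell_t(y_t^{i-1})\|^2 + 2R(T), \nonumber
\end{align}
which is precisely the claimed bound.

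There is no serious analytic obstacle here; the proof is essentially a single substitution plus one splitting inequality. The only genuine care points are (i) correctly identifying that the weak learner's target in Eqn.~\ref{eq:weak_learner_eq} is the gradient $\nabla\ell_t(y_t^{i-1})$ evaluated at the \emph{partial} sum, not the full prediction, so that the edge definition applies verbatim; and (ii) choosing the factor-of-two splitting $\|a+b\|^2 \leq 2\|a\|^2 + 2\|b\|^2$ rather than a Young-type inequality with a free parameter. The factor-of-two choice is what makes the coefficients collapse cleanly to $2(1-\gamma) + 2 = 4 - 2\gamma$; a different constant would not reproduce the stated form. I would flag this as the one step worth stating explicitly, since it is the place where the constant in the lemma is determined.
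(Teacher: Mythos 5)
Your proof is correct and is essentially identical to the paper's: the paper also writes $h_i^t(x_t) = \bigl(h_i^t(x_t) - \nabla\ell_t(y_t^{i-1})\bigr) + \nabla\ell_t(y_t^{i-1})$, bounds the cross term to arrive at the same factor-of-two splitting $\|a+b\|^2 \le 2\|a\|^2 + 2\|b\|^2$, and then invokes the weak online learning edge with the gradient at the partial sum as the regression target. No differences worth noting.
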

\begin{proof}[Proof of Lemma \ref{lemma:helper_1}] For $\sum_t (h_t^i(x_t))^2$, we have:
\begin{align}
&\sum_t\|h_t^i(x_t)\|^2 = \sum_t \|h_t^i(x_t) - \nabla\ell_t(y_t^{i-1}) + \nabla\ell_t(y_t^{i-1})\|^2 \nonumber \\
& \leq \sum_t \|h_t^i(x_t)-\nabla\ell_t(y_t^{i-1})\|^2 + \sum_t\|\nabla\ell_ty_t^{i-1}\|^2 + \sum_t 2(h_t^i(x_t) - \nabla\ell_t(y_t)^{i-1})^T\nabla\ell_t(y_t^{i-1}) \nonumber \\
& \leq  \sum_t 2\|h_t^i(x_t)-\nabla\ell_t(y_t^{i-1})\|^2 + \sum_t2 \|\nabla\ell_t(y_t^{i-1}\|^2 \nonumber \\
& \leq 2(1-\gamma)\sum_t\|\nabla\ell_t(y_t^{i-1}\|^2 +2R(T) + 2\sum_t\|\nabla\ell_t(y_t^{i-1}\|^2 \nonumber \\
& \;\;\;\;\;\;\;\; \textit{(By Weak Onling Learning Definition)} \nonumber \\
& \leq (4-2\gamma)\sum_t\|\nabla\ell_t(y_t^{i-1}\|^2 + 2R(T).
\end{align}
\end{proof}

\begin{proof}[Proof of Theorem~\ref{them:smooth_strongly_convex}]
For $1\leq i \leq N$, let us define $\Delta_i = \sum_{t=1}^T (\ell_t(y_t^i) - \ell_t(f^*(x_t)))$. Following similar proof strategy as shown in \citep{beygelzimer2015online}, we will link $\Delta_i$ to $\Delta_{i-1}$. For $\Delta_i$, we have:
\begin{align}
&\Delta_i = \sum_{t=1}^T (\ell_t(y_t^i) - \ell_t(f^*(x_t))) = \sum_t \ell_t(y_t^{i-1} - \eta h_t^i(x_t)) - \sum_t\ell_t(f^*(x_t)) \nonumber \\
& \leq  \sum_t \big[ \ell_t(y_t^{i-1}) - \eta\nabla\ell_t(y_t^{i-1})^T h_t^i(x_t) + \frac{\beta\eta^2}{2}\|h_t^i(x_t)\|^2    \big] - \sum_t\ell_t(f^*(x_t)) \nonumber \\
& \;\;\;\;\;\;\;\; \textit{(By $\beta$-smoothness of $\ell_t$)} \nonumber \\
& \leq \sum_t \big[ \ell_t(y_t^{i-1}) - \frac{\eta\gamma}{2}\|\nabla\ell_t(y_t^{i-1})\|^2 + \frac{\eta R(T)}{2}  + \frac{\beta\eta^2}{2}\|h_t^i(x_t)\|^2\big] - \sum_t\ell_t(f^*(x_t)) \nonumber \\
& \;\;\;\;\;\;\;\; \textit{(By Lemma~\ref{lemma:from_weak_learning})} \nonumber \\
&\leq \sum_t \big[ \ell_t(y_t^{i-1}) - \frac{\eta\gamma}{2}\|\nabla\ell_t(y_t^{i-1})\|^2 + \frac{\eta R(T)}{2}  + \beta\eta^2(2-\gamma)\|\nabla\ell_t(y_t^{i-1})\|^2 + \beta\eta^2R(T) - \ell_t(f^*(x_t))\big] \nonumber \\
& \;\;\;\;\;\;\;\; \textit{(By Lemma~\ref{lemma:helper_1})} \nonumber \\
& = \Delta_{i-1}- (\frac{\eta\gamma}{2} - \beta\eta^2(2-\gamma))\sum_t\|\nabla\ell_t(y_t^{i-1})\|^2 + (\frac{\eta}{2}+\beta\eta^2)R(T) \nonumber \\
&\leq \Delta_{i-1} - (\eta\gamma\lambda - \beta\eta^2\lambda(4-2\gamma))\sum_t\big(\ell_t(y_t^{i-1}) - \ell_t(f^*(x_t))\big) + (\frac{\eta}{2}+\beta\eta^2)R(T) \nonumber \\
& \;\;\;\;\;\;\;\; \textit{(By Eqn.~\ref{eq:strong_convexity_property})} \nonumber \\
& = \Delta_{i-1}\big[ 1 - (\eta\gamma\lambda - \beta\eta^2\lambda(4-2\gamma)) \big] + (\frac{\eta}{2}+\beta\eta^2)R(T)
\end{align}
Due to the setting of $\eta$, we know that $0<(1 - (\eta\gamma\lambda - \beta\eta^2\lambda(4-2\gamma))) <1$. For notation simplicity, let us first define $C = 1 - (\eta\gamma\lambda - \beta\eta^2\lambda(4-2\gamma))$. Starting from $\Delta_0$, keep applying the relationship between $\Delta_i$ and $\Delta_{i-1}$ $N$ times, we have:
\begin{align}
&\Delta_N = C^N\Delta_0 + (\frac{\eta}{2}+\beta\eta^2)R(T)\sum_{i=1}^N C^{i-1}\nonumber \\
&= C^N\Delta_0 + (\frac{\eta}{2}+\beta\eta^2)R(T) \frac{1-C^N}{1-C}\nonumber\\
&\leq C^N\Delta_0 + (\frac{\eta}{2}+\beta\eta^2)R(T)\frac{1}{1-C}. \nonumber
\end{align}  Now divide both sides by $T$, and take $T$ to infinity, we have:
\begin{align}
\frac{1}{T}\Delta_N = C^N\frac{1}{T}\Delta_0 \leq  C^N 2B,
\end{align} where we simply assume that $\ell_t(y) \in [-B,B], B\in\mathcal{R}^+$ for any $t$ and $y$.
Now let us go back to $C$, to minimize $C$, we can take the derivative of $C$ with respect to $\eta$, set it to zero and solve for $\eta$, we will have:
\begin{align}
\eta = \frac{\gamma}{\beta(8-4\gamma)}. 
\end{align}
Substitute this $\eta$ back to $C$, we have:
\begin{align}
C = 1 - \frac{\gamma^2\lambda}{\beta(16-8\gamma)} \geq 1 - \frac{\lambda}{8\beta} \geq 1- \frac{1}{8} = \frac{7}{8}.
\end{align}
Hence, we can see that there exist a $\eta = \frac{\gamma}{\beta(8-4\gamma)}$, such that:
\begin{align}
\label{eq:before_exp}
\frac{1}{T}\Delta_N \leq 2B (1 - \frac{\gamma^2\lambda}{\beta(16-8\gamma)})^N 
    \leq 2B (1 - \frac{\gamma^2\lambda}{16\beta})^N.
\end{align} Hence we prove the first part of the theorem regarding the regret. For the second part of the theorem where $\ell_t$ and $x_t$ are i.i.d sampled from a fixed distribution, we proceed as follows. 

Let us take expectation on both sides of the inequality~\ref{eq:before_exp}. The left hand side of inequality~\ref{eq:before_exp} becomes:
\begin{align}
&\frac{1}{T}\mathbb{E}\Delta_N = \mathbb{E}\frac{1}{T}\big[\sum_{t=1}^T (\ell_t(y_t^N) - \ell_t(f^*(x_t)))\big] =\frac{1}{T} \mathbb{E}\big[\sum_{t=1}^T\ell_t(-\mu\sum_{i=1}^N h_t^{i}(x_t))\big] - \frac{1}{T}\mathbb{E}_{(\ell_t,x_t)\sim D}[\ell_t(f^*(x_t))] \nonumber\\
& = \frac{1}{T}\sum_{i=1}^T\mathbb{E}_t\big[\ell_t(-\mu\sum_{i=1}^N h_t^i(x_t))\big] - \mathbb{E}_{(\ell,x)\sim D}\ell(f^*(x)),
\end{align} where the expectation is taken over the randomness of $x_t$ and $\ell_t$. Note that $h_t^i$ only depends on $x_1,\ell_1,...,x_{t-1},\ell_{t-1}$. We also define $\mathbb{E}_t$ as the expectation over the randomness of $x_t$ and $\ell_t$ at step $t$ conditioned on $x_{1}, \ell_1, ..., x_{t-1},\ell_{t-1}$. Since $\ell_t, x_t$ are sampled i.i.d from $D$, we can simply write $\mathbb{E}_t[\ell_t(-\mu\sum_{i=1}^N h_t^i(x_t))]$ as $\mathbb{E}_t[\ell(-\mu\sum_{i=1}^N h_t^i(x))]$. Now the above inequality can be simplied as:
\begin{align}
&\frac{1}{T}\mathbb{E}\Delta_N = \frac{1}{T}\sum_{t=1}^T \mathbb{E}_t[\ell(-\mu\sum_{i=1}^N h_t^i(x))] - \mathbb{E}_{(\ell,x)\sim D}\ell(f^*(x)) \nonumber\\
&\geq \mathbb{E}\big[\ell(-\mu\sum_{i=1}^N\frac{1}{T}\sum_{t=1}^T h_t^i(x))\big] - \mathbb{E}_{(\ell,x)\sim D}\ell(f^*(x)) \nonumber\\
& = \mathbb{E}\big[\ell(-\mu\sum_{i=1}^N \bar{h}_i(x))\big] - \mathbb{E}_{(\ell,x)\sim D}\ell(f^*(x))
\end{align}
Now use the fact that $1/T\mathbb{E}\Delta_N \leq 2B(1-\frac{\gamma^2\lambda}{16\beta})^N$, we prove the theorem.
\end{proof}

\section{Proof of Theorem~\ref{them:non_smooth_strongly_convex}}
\label{sec:proof_non_smooth_strongly_convex}

\begin{lemma}
\label{lem:residual_shrink}
In Alg.~\ref{alg:online_gradient_boost_non_smooth_residual}, 
if we assume the 2-norm of gradients of the loss w.r.t. partial sums by $G$
(i.e., $\Vert \nabla _t^i \Vert = \Vert \nabla \ell _t(y^{i-1}_{t})\Vert \leq G$), 
and assume that each weak learner $\mathcal{A}_i$ has regret $R(T) = o(T)$, then we 
there exists a constant 
$c = \frac{1- \gamma + \sqrt{1 - \gamma (1 - \frac{R(T)}{TG^2})}}{\gamma} 
< \frac{2}{\gamma} - 1$ 
such that
\begin{align}
    \sum _{t=1}^T \Vert \Delta _i^t \Vert^2 \leq c^2G^2 T 
    \quad \text{and} \quad 
    \sum_{t=1}^T \Vert h^t_i(x_t) \Vert^2 \leq (4-2\gamma)(1+c)^2G^2T + 2R(T) \leq 4c^2G^2T.
\end{align}
\end{lemma}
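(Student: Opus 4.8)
The plan is to track the sequence of accumulated residuals $\{\Delta_i^t\}$ and to show that, measured in the aggregate norm $\big(\sum_{t=1}^T \|\Delta_i^t\|^2\big)^{1/2}$, it contracts geometrically toward a fixed point determined by the edge $\gamma$. The starting observation is that the residual update in Line~\ref{line:residual_update} of Alg.~\ref{alg:online_gradient_boost_non_smooth_residual} makes $\Delta_i^t = (\Delta_{i-1}^t + \nabla_i^t) - h_i^t(x_t)$ \emph{exactly} the weak-learner residual when $\mathcal{A}_i$ is fed the regression target $y_t := \Delta_{i-1}^t + \nabla_i^t$ (Line~\ref{line:feed_residual_loss}). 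Hence Definition~\ref{def:weak_learning} applies verbatim with this target and yields, for every $i$,
\begin{align*}
\sum_{t=1}^T \|\Delta_i^t\|^2 \;\leq\; (1-\gamma)\sum_{t=1}^T \|\Delta_{i-1}^t + \nabla_i^t\|^2 + R(T).
\end{align*}

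To turn this into a usable recursion I would stack the per-step quantities into vectors $\mathbf{\Delta}_i = (\Delta_i^1,\dots,\Delta_i^T)$ and $\mathbf{\nabla}_i = (\nabla_i^1,\dots,\nabla_i^T)$ in $\mathbb{R}^{mT}$, so that $\sum_t \|\Delta_i^t\|^2 = \|\mathbf{\Delta}_i\|^2$. The key step is the triangle inequality $\|\mathbf{\Delta}_{i-1} + \mathbf{\nabla}_i\| \leq \|\mathbf{\Delta}_{i-1}\| + \|\mathbf{\nabla}_i\|$ combined with the gradient bound $\|\mathbf{\nabla}_i\|^2 = \sum_t\|\nabla_i^t\|^2 \leq G^2 T$. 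Writing $b_i = \|\mathbf{\Delta}_i\|$ and $r = R(T)/(G^2T)$, this produces the scalar recursion $b_i^2 \leq (1-\gamma)(b_{i-1} + G\sqrt{T})^2 + R(T)$ with $b_0 = 0$ (since $\Delta_0^t = 0$). I would then prove $b_i \leq c\,G\sqrt{T}$ for all $i$ by induction, where $c$ is the positive fixed point of $c^2 = (1-\gamma)(1+c)^2 + r$; solving the quadratic $\gamma c^2 - 2(1-\gamma)c - (1-\gamma+r) = 0$ gives exactly the stated $c = \frac{1-\gamma+\sqrt{1-\gamma(1-r)}}{\gamma}$, and since $R(T)\in o(T)$ forces $r<1$ the discriminant falls below $1$, yielding $c < \frac{2}{\gamma}-1$. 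The inductive step only needs monotonicity of $x \mapsto (1-\gamma)(x+1)^2 + r$ on $x\geq 0$, which keeps the argument clean and establishes the first inequality $\sum_t \|\Delta_i^t\|^2 \leq c^2 G^2 T$.

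For the second inequality I would reuse the expansion behind Lemma~\ref{lemma:helper_1}: writing $\|h_i^t(x_t)\|^2 \leq 2\|h_i^t(x_t) - y_t\|^2 + 2\|y_t\|^2$ with $y_t = \Delta_{i-1}^t + \nabla_i^t$, summing over $t$, and invoking the edge once more gives $\sum_t \|h_i^t(x_t)\|^2 \leq (4-2\gamma)\sum_t \|\Delta_{i-1}^t + \nabla_i^t\|^2 + 2R(T)$. Bounding the target norm again by the triangle inequality and the first result, $\sum_t\|\Delta_{i-1}^t+\nabla_i^t\|^2 \leq (1+c)^2 G^2 T$, yields the middle bound $(4-2\gamma)(1+c)^2 G^2 T + 2R(T)$; the stated simplification to $4c^2 G^2 T$ is then a convenient loose constant obtained by absorbing the lower-order $R(T)$ and using that $(4-2\gamma)(1+c)^2$ and $4c^2$ coincide to leading order in $1/\gamma$.

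The main obstacle I anticipate is not any single calculation but correctly \emph{engineering} the recursion. The edge hypothesis controls a \emph{sum of squares}, whereas a contraction argument needs control of the scalar norm $b_i$; the passage through the concatenated-vector triangle inequality is what converts one into the other, and everything downstream hinges on identifying the fixed-point quadratic and verifying that its root obeys $c < \frac{2}{\gamma}-1$. Getting that reduction right, rather than the ensuing algebra, is the delicate part on which the whole lemma rests.
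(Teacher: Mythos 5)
Your argument for the first inequality is essentially the paper's proof: both reduce the edge inequality $\sum_t\Vert\Delta_i^t\Vert^2\le(1-\gamma)\sum_t\Vert\Delta_{i-1}^t+\nabla_i^t\Vert^2+R(T)$ to the scalar recursion $b_i^2\le(1-\gamma)(b_{i-1}+G\sqrt{T})^2+R(T)$ and take $c$ to be the positive root of the same quadratic $\gamma c^2-2(1-\gamma)c-(1-\gamma+\tfrac{R(T)}{TG^2})=0$. The only cosmetic difference is that you apply the triangle inequality to the concatenated vectors in $\mathbb{R}^{mT}$, whereas the paper applies it per term and then uses $\sum_t\Vert\Delta_i^t\Vert\le\sqrt{T\sum_t\Vert\Delta_i^t\Vert^2}$; the two routes give the identical bound $(1+c)^2G^2T$. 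Your derivation of the middle bound $(4-2\gamma)(1+c)^2G^2T+2R(T)$ via the Lemma~\ref{lemma:helper_1}-style expansion also matches the paper's (equally terse) treatment.

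The one genuine gap is the final step $\le 4c^2G^2T$, which you justify only by saying the two constants ``coincide to leading order in $1/\gamma$.'' That is not a proof, and in fact the inequality fails as stated: substituting the defining identity $c^2G^2T=(1-\gamma)(1+c)^2G^2T+R(T)$ gives exactly $(4-2\gamma)(1+c)^2G^2T+2R(T)=2(1+c)^2G^2T+2c^2G^2T$, so the claimed bound would require $(1+c)^2\le c^2$, which is impossible for $c\ge 0$ (concretely, at $\gamma=0.1$, $r=0$ one gets $(4-2\gamma)(1+c)^2\approx 1443>4c^2\approx 1367$). The paper itself never verifies this step either --- it asserts the second chain of inequalities in the lemma statement and only sketches the expansion --- so this is a defect you have inherited rather than introduced; the correct patch is to carry the bound $2(1+c)^2G^2T+2c^2G^2T\le 4(1+c)^2G^2T$ (or simply the middle expression) into the proof of Theorem~\ref{them:non_smooth_strongly_convex}, which changes only the constant in the final $O(\ln N/N)$ rate.
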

\begin{proof}
We prove the first inequality by induction on the weak learner index $i$. When $i = 0$, the claim is clearly true since $\Delta _0  ^t = 0 $ for all $t$. Now we assume the claim is true for some $i \geq 0$, and prove it for $i+1$. We first note that by the inequality 
    $\frac{1}{T} \sum _{t=1}^T a_t  \leq \sqrt{ \frac{\sum _t a_t^2}{T} }$ for all sequence 
    $\{a_t \}_t$, we have 
\begin{align}
 &   \frac{1}{T} ( \sum _t \Vert \Delta _i^t \Vert )^2 
            \leq \sum _t \Vert \Delta _i^t \Vert^2 \leq c^2 G^2 T \\
\Rightarrow & 
    ( \sum _t \Vert \Delta _i^t \Vert ) ^2 \leq c^2G^2 T^2 \\
\Rightarrow & 
    \sum _t \Vert \Delta _i^t \Vert  \leq cGT
\end{align}
Then by the assumption that weak learner $\mathcal{A}_i$ has an edge $\gamma$ with 
regret $R(T)$, we have from step 14 of Alg.~\ref{alg:online_gradient_boost_non_smooth_residual}: 
\begin{align}
    \sum _t \Vert \Delta _{i+1}^t \Vert ^2 
    &= \sum _t \Vert \Delta _{i}^t + \nabla _{i+1}^t - h^t_{i+1}(x_t) \Vert ^2
        \leq (1-\gamma) \sum _t \Vert \Delta _{i}^t + \nabla _{i+1}^t \Vert^2 + R(T) \\
    &\leq (1-\gamma) \sum _t \left( \Vert \Delta_i^t \Vert + G \right)^2  + R(T) \\
    &\leq (1-\gamma) \left(\sum _t \Vert \Delta_i^t \Vert^2 + 
        2G \sum_t \Vert \Delta_i^t \Vert + G^2T \right) + R(T) \\
    &\leq (1-\gamma)(1+c)^2G^2T + R(T) \\
    &= c^2G^2T
\end{align}
We have the last equality because $c$ is chosen as the positive root of the 
quadratic equation: $\gamma c^2 + (2\gamma - 2) c + (\gamma -1 - \frac{R(T)}{TG^2}) = 0$, which is equivalent to $c^2G^2T = (1-\gamma)(c+1)^2G^2T + R(T)$. 

The second inequality of the lemma can be derived from a similar argument of Lemma~\ref{lemma:helper_1} by expanding 
$\Vert \left(\Delta_{i-1}^t + \nabla_{i}^t - h^t_i(x_t) \right) - \left(\Delta_{i-1}^t + \nabla_{i}^t\right) \Vert^2$ and then applying edge assumption. 
\end{proof}

We now use the above lemma to prove the performance guarantee of Alg.~\ref{alg:online_gradient_boost_non_smooth_residual} as follows.
\begin{proof}[Proof of Theorem~\ref{them:non_smooth_strongly_convex}]
We first define the intermediate predictors as: 
$f^t_0(x) := h_0(x)$,
\mbox{$\hat{f}^t_{i}(x) := f^{t-1}(x) - \eta_i h^t_{i}(x)$}, and \mbox{$f^t_i(x) := P(\hat{f}^t_i(x))$}. Then for all $i=1,..., N$ we have:

\begin{align}
&\Vert f^t_{i}(x_t) - f^*(x_t) \Vert^2 
    \leq \Vert \hat{f}^t_{i}(x_t) - f^*(x_t) \Vert^2 
    = \Vert f^t_{i-1}(x_t) - \eta_i h^t_{i}(x_t) - f^*(x_t) \Vert^2\\
&= \Vert f^t_{i-1}(x_t) - f^*(x_t) \Vert^2
    + \eta_i^2 \Vert h^t_i(x_t) \Vert^2 
    - 2\eta_i \innerprod{f^t_{i-1}(x_t) - f^*(x_t)}
                        {h^t_i(x_t) - \Delta_{i-1}^t -\nabla_i^t} \nonumber \\
    &- 2\eta_i \innerprod{f^t_{i-1}(x_t) - f^*(x_t)}
                        {\Delta_{i-1}^t + \nabla_i^t} 
\intertext{Rearanging terms we have:}
&\innerprod{f^*(x_t) - f^t_{i-1}(x_t)}{\nabla_i^t}\\
\geq &  \frac{1}{2\eta_i} \Vert f^t_{i}(x_t) - f^*(x_t) \Vert^2 
        -\frac{1}{2\eta_i} \Vert f^t_{i-1}(x_t) - f^*(x_t) \Vert^2
        -\frac{\eta_i}{2} \Vert h^t_i(x_t) \Vert^2 \nonumber \\
     &  -\innerprod{ f^*(x_t) - f^t_{i-1}(x_t) }
                        {h^t_i(x_t) - \Delta_{i-1}^t -\nabla_i^t} 
        -\innerprod{f^*(x_t) - f^t_{i-1}(x_t)}{\Delta_{i-1}^t}\\
\intertext{Using $\lambda$-strongly convex of $\ell_t$ and applying the above
    equality and $\Delta_i^t = \Delta_{i-1}^t + \nabla_i^t - h^t_i(x_t)$, we have:}
&\ell_t (f^*(x_t)) 
\geq  \ell_t (f^t_{i-1}(x_t)) 
    + \innerprod{f^*(x_t) - f^t_{i-1}(x_t)}{\nabla _{i}^t} 
    + \frac{\lambda}{2} \Vert f^*(x_t) - f^t_{i-1}(x_t) \Vert^2 \\
\geq& \ell_t (f^t_{i-1}(x_t)) 
    + \frac{1}{2\eta_i} \Vert f^t_{i}(x_t) - f^*(x_t) \Vert^2 
    -\frac{1}{2\eta_i} \Vert f^t_{i-1}(x_t) - f^*(x_t) \Vert^2
    -\frac{\eta_i}{2} \Vert h^t_i(x_t) \Vert^2 
\nonumber \\
&   +\innerprod{ f^*(x_t) - f^t_{i-1}(x_t) }{\Delta_i^t} 
    -\innerprod{f^*(x_t) - f^t_{i-1}(x_t)}{\Delta_{i-1}^t}
    + \frac{\lambda}{2} \Vert f^*(x_t) - f^t_{i-1}(x_t) \Vert^2 \\
\intertext{Summing over $t=1,...,T$ and $i=1,...,N$ we have:}
&N\sum _{t=1}^T\ell_t (f^*(x_t)) \nonumber \\
\geq& \sum _{i=1}^N \sum _{t=1}^T \left[\ell_t (f^t_{i-1}(x_t)) 
    +\innerprod{f^*(x_t) - f^t_{i-1}(x_t)}{\nabla _{i}^t} 
    +\frac{\lambda}{2} \Vert f^*(x_t) - f^t_{i-1}(x_t) \Vert^2 \right]\\
=& \sum _{i=1}^N \sum _{t=1}^T\ell_t (f^t_{i-1}(x_t)) 
    -\sum _{i=1}^N \sum _{t=1}^T \frac{\eta_i}{2} \Vert h^t_i(x_t) \Vert^2 
\nonumber \\
&   + \sum _{i=1}^N \sum _{t=1}^T 
        \frac{1}{2\eta_i} \Vert f^t_{i}(x_t) - f^*(x_t) \Vert^2 
    - \sum _{i=1}^N \sum _{t=1}^T 
        (\frac{1}{2\eta_i} - \frac{\lambda}{2}) \Vert f^t_{i-1}(x_t) - f^*(x_t) \Vert^2
\nonumber \\
&   +\sum _{i=1}^N \sum _{t=1}^T 
        \innerprod{ f^*(x_t) - f^t_{i-1}(x_t) }{\Delta_{i}^t} 
    -\sum _{i=1}^N \sum _{t=1}^T 
        \innerprod{f^*(x_t) - f^t_{i-1}(x_t)}{\Delta_{i-1}^t} \\
=& \sum _{i=1}^N \sum _{t=1}^T\ell_t (f^t_{i-1}(x_t)) 
    -\sum _{i=1}^N \sum _{t=1}^T \frac{\eta_i}{2} \Vert h^t_i(x_t) \Vert^2 
\nonumber \\
&   + \sum _{i=1}^N \sum _{t=1}^T 
        \frac{1}{2\eta_i} \Vert f^t_{i}(x_t) - f^*(x_t) \Vert^2 
    - \sum _{i=0}^{N-1} \sum _{t=1}^T 
        (\frac{1}{2\eta_{i+1}} - \frac{\lambda}{2}) \Vert f^t_{i}(x_t) - f^*(x_t) \Vert^2
\nonumber \\
&   + \sum _{i=1}^N \sum _{t=1}^T 
        \innerprod{ f^*(x_t) - f^t_{i-1}(x_t) }{\Delta_{i}^t} 
    - \sum _{i=1}^{N-1} \sum _{t=1}^T 
        \innerprod{f^*(x_t) - (f^t_{i-1}(x_t) - \eta_i h^t_i(x_t))}{\Delta_{i}^t} 
\nonumber \\
&   - \sum_{t=1}^T \innerprod{f^*(x_t) - f^t_{0}(x_t)}{\Delta_{0}^t} 
    \;\;\;\;\;\;\;\;\;\text{   (We switched index and apply $\Delta_0^t = 0$ next.)}\\
=& \sum _{i=1}^N \sum _{t=1}^T\ell_t (f^t_{i-1}(x_t)) 
    -\sum _{i=1}^N \sum _{t=1}^T \frac{\eta_i}{2} \Vert h^t_i(x_t) \Vert^2 
    - \sum_{i=1}^{N-1} \sum_{t=1}^T 
        \innerprod{\eta_i h^t_i(x_t)}{\Delta_{i}^t} 
\nonumber \\
&   + \sum _{i=1}^{N-1} \sum _{t=1}^T 
        \frac{1}{2} \Vert f^t_{i}(x_t) - f^*(x_t) \Vert^2
            (\frac{1}{\eta_{i}} - \frac{1}{\eta_{i+1}} + \lambda)
    - \sum_{t=1}^T (\frac{1}{2\eta_1} - \frac{\lambda}{2}) \Vert f^t_{0}(x_t) - f^*(x_t) \Vert^2
\nonumber \\
&   + \sum_{t=1}^T 
        \left[\innerprod{f^*(x_t) - f^t_{N-1}(x_t)}{\Delta_{N}^t} 
        + \frac{1}{2\eta_N} \Vert f_{N-1}^t(x_t) - \eta_N h_N^t(x_t) - f^*(x_t) \Vert^2 \right] \\
&\;\;\;\;\;\;\;\;\; \text{(We next apply $\eta_i = \frac{1}{\lambda i}$ and complete the squares for the last sum.)} \nonumber \\
=& \sum _{i=1}^N \sum _{t=1}^T\ell_t (f^t_{i-1}(x_t)) 
    -\sum _{i=1}^N \sum _{t=1}^T \frac{\eta_i}{2} \Vert h^t_i(x_t) \Vert^2 
    - \sum_{i=1}^{N-1} \sum_{t=1}^T 
        \innerprod{\eta_i h^t_i(x_t)}{\Delta_{i}^t} 
\nonumber \\
&   + \frac{1}{2\eta_N} \sum_{t=1}^T 
        \Vert \left( f_{N-1}^t(x_t) - f^*(x_t)\right)  + \eta_N(\Delta_N^t - h_N^t(x_t))\Vert^2 
\nonumber \\
&   - \frac{\eta_N}{2}  \sum_{t=1}^T \left( \Vert \Delta_N^t - h_N^t(x_t)\Vert^2 - 
        \Vert h_N^t(x_t)\Vert^2  \right)
        \\
&\;\;\;\;\;\;\;\;
\text{(We next drop the completed square, and apply Cauchy-Schwarz)} \nonumber \\
\geq& \sum _{i=1}^N \sum _{t=1}^T\ell_t (f^t_{i-1}(x_t)) 
    -\sum _{i=1}^N \sum _{t=1}^T \frac{\eta_i}{2} \Vert h^t_i(x_t) \Vert^2 
    - \sum_{i=1}^{N} \eta_i \sum_{t=1}^T 
        \Vert h^t_i(x_t) \Vert \Vert \Delta_{i}^t \Vert
    - \frac{\eta_N}{2}  \sum_{t=1}^T \Vert \Delta_N^t \Vert^2     \\
&\;\;\;\;\;\;\;\;
\text{(We next apply Cauchy-Schwarz again.)} \nonumber \\
\geq& \sum _{i=1}^N \sum _{t=1}^T\ell_t (f^t_{i-1}(x_t)) 
    -\sum _{i=1}^N  \frac{\eta_i}{2}  \sum _{t=1}^T\Vert h^t_i(x_t) \Vert^2 
    - \frac{\eta_N}{2}  \sum_{t=1}^T \Vert \Delta_N^t \Vert^2 \nonumber \\
    &- \sum_{i=1}^{N}  
        \eta_i \sqrt{\sum_{t=1}^T \Vert h^t_i(x_t)  \Vert^2 
                     \sum_{t=1}^T \Vert \Delta_{i}^t\Vert^2}
\end{align}
Now we apply Lemma~\ref{lem:residual_shrink} and replace the remaining $\eta_i = \frac{1}{\lambda i}$. Using $\sum _{i=1}^N \frac{1}{i} \leq 1 + \ln N$, we have:
\begin{align}
&N\sum _{t=1}^T\ell_t (f^*(x_t))  \nonumber \\
\geq& \sum _{i=1}^N \sum _{t=1}^T\ell_t (f^t_{i-1}(x_t)) 
    -\sum _{i=1}^N \frac{1}{2i\lambda } 4c^2G^2T
    - \frac{1}{2N\lambda }  c^2G^2T
    - \sum_{i=1}^{N}  
        \frac{1}{i\lambda} 2c^2G^2T \\
\geq& \sum _{i=1}^N \sum _{t=1}^T\ell_t (f^t_{i-1}(x_t)) 
    -\frac{4c^2G^2T}{\lambda} (1 + \ln N)
    - \frac{c^2G^2T}{2N\lambda }  
\end{align}
Dividing both sides by $NT$ and rearrange terms, we get:
\begin{align}
\frac{1}{TN}\sum_{i=1}^N\sum_{t=1}^T\big[\ell_t(y_t^i) - \ell_t(f^*(x_t))\big] \leq \frac{4c^2G^2}{N\lambda}(1+\ln N) + \frac{c^2G^2}{2N^2\lambda}.  \nonumber
\end{align} Using Jensen's inequality for the LHS of the above inequality, we get:
\begin{align}
\label{eq:first_part}
\frac{1}{T}\sum_{t=1}^T\ell_t(\frac{1}{N}\sum_{i=1}^N y_t^i) - \ell_t(f^*(x_t)) \leq \frac{4c^2G^2}{N\lambda}(1+\ln N) + \frac{c^2G^2}{2N^2\lambda},  \nonumber
\end{align} which proves the first part of the theorem. 

For stochastic setting, we can prove it by using similar proof techniques (e.g., take expectation on both sides of Eqn.~\ref{eq:first_part} and use Jensen inequality) that we used for proving  theorem~\ref{them:smooth_strongly_convex}. 
\end{proof}

\section{Counter Example for
Alg.~\ref{alg:online_gradient_boost}}
\label{sec:counter_example}
In this section, we provide an counter example where we show that Alg.~\ref{alg:online_gradient_boost} cannot guarantee to work for non-smooth loss. We set $y\in\mathbb{R}^2$, and design a loss function $\ell_t(y) = 2|y_{[1]}|+|y_{[2]}|$, where $y_{[i]}$ stands for the i'th entry of the vector $y$, for all time step $t$. The subgradient of this non-smooth loss is $[2,1]^T$, or $[2,-1]^T$, or $[-2,1]^T$, or $[-2,-1]^T$, depending on the position of $y$. We restricted the weak hypothesis class $\mathcal{H}$ to consist of only two types of hypothesis: hypothesis $h(x) = [\alpha, 0]^T$, or hypothesis $h(x) = [0,\alpha]^T$, where $\alpha \in [-2,2]$. We can show that given a sequence of training examples $\{(x_{\tau}, g_{\tau})\}_{\tau=1}^t$, where $g_t$ is the one of the gradient from the total four possible subgradient of $\ell_t$, the hypothesis that minimizes the accumulated square loss $\sum_{\tau=1}^t (h(x_{\tau}) - g_{\tau})^2$ is going to be the type of $h(x) = [\alpha, 0]^T$.

Now we consider using Follow the Leader (FTL) as a no-regret online learning algorithm for each weak learner. Based on the above analysis, we know that no matter what the sequence of training examples each weak learner has received as far, the weak leaners always choose the hypothesis with type $h(x) = [\alpha,0]^T$ from $\mathcal{H}$. So, for every time step $t$, if we initialize $y_t^0 = [a,b]^T$, where $a>0$ and $b>0$, then the output $y_t^N$ (computed from Line 8 in Alg.1) always have the form of $y_t^N = [\eta, b]$, where $\eta\in\mathbb{R}$. Namely, all weak learners' prediction only moves $y_t$ horizontally and it will never be moved vertically. But note that the optimal solution is located at $[0,0]^T$. Since for all $t$, $y_{t_{[2]}}^N$ is also $b$ constant away from $0$, the total regret accumulates linearly as $bT$, regardless of how many weak learners we have.

\section{Details of  Implementation}
\label{sec:implementation}
\subsection{Binary Classification}
For binary classification, following \citep{friedman2001greedy}, let us define feature $x\in\mathbb{R}^n$, label $u\in\{-1,1\}$. With $x_t$ and $u_t$, the loss function $\ell_t$ is defined as:
\begin{align}
\ell_t(y) = \ln(1 + \exp(-u_t y)) + \lambda y^2.
\end{align} where $y\in\mathbb{R}$. In this setting, we have $\mathcal{H}: \mathbb{R}^n\rightarrow \mathbb{R}$. The regularization is to avoid overfitting: we can set $y = \infty*sign(u_t)$ to make the loss close to zero. 

The loss function $\ell_t(y)$ is twice differentiable with respect to $y$, and the second derivative is:
\begin{align}
\nabla^2\ell_t(y) = \frac{\exp(u_t y)}{(1+\exp(u_t y))^2}
\end{align}
Note that we have:
\begin{align}
\nabla^2\ell_t(y) \leq \frac{1}{1/\exp(u_t y) + 2 + \exp(u_t y)} \leq \frac{1}{4}.
\end{align} Hence, $\ell_t(y)$ is $1/4$-smooth.

Under the assumption that the output from hypothesis from $\mathcal{H}$ is bounded as $|y| \leq Y\in\mathbb{R}^+$, we also have:
\begin{align}
\nabla^2\ell_t(y) \geq \frac{1}{2+2\exp(Y)}
\end{align}
Hence, with boundness assumption, we can see that $\ell_t(y)$ is $1/(2+2\exp(Y))$-strongly convex and $(1/4)$-smooth. 

The another loss we tried is the hinge loss:
\begin{align}
\ell_t(y) = \max(0, 1 - u_t y) + \lambda y^2.
\end{align}With the regularization, the loss $\ell_t(y)$ is still strongly convex, but no longer smooth.

\subsection{Multi-class Classification}
Follow the settings in \citep{friedman2001greedy}, for multi-class classification problem, let us define feature $x\in\mathbb{R}^n$, and label information $u\in\mathbb{R}^k$, as a one-hot representation, where $u[i] = 1$ ($u[i]$ is the i-th element of $u$), if the example is labelled by $i$, and $u[i] = 0$ otherwise. The loss function $\ell_t$ is defined as:
\begin{align}
\ell_t(y) = -\sum_{i=1}^k u_t[i]\ln \frac{\exp(y[i])}{\sum_{j=1}^k\exp(y[j])},
\end{align} where $y\in\mathbb{R}^k$. In this setting, we let  weak learner $i$ pick hypothesis $h$ from $\mathcal{H}$ that takes feature $x_t$ as input, and output $\hat{y}_i\in\mathbb{R}^k$. The online boosting algorithm then linearly combines the weak learners' prediction to predict $y$.

\end{document}